\let\NAT@parse\undefined
\crefname{equation}{}{}
\let\oldforall\forall
\let\forall\undefined
\DeclareMathOperator{\forall}{\oldforall}
\newtheorem{lemma}{Lemma}
\newtheorem{theorem}{Theorem}
\newtheorem{problem}{Problem}
\newtheorem{definition}{Definition}
\newtheorem{assumption}{Assumption}
\newtheorem{proof}{Proof}
\renewcommand{\secref}[1]{Section~\ref{#1}}
\renewcommand{\figref}[1]{Fig.~$\ref{#1}$}
\renewcommand{\algref}[1]{Algorithm~$\ref{#1}$}
\newcommand{\lemmaref}[1]{Lemma~$\ref{#1}$}
\newcommand{\linref}[1]{line~$\ref{#1}$}
\renewenvironment{proof}{\begin{IEEEproof}}{\end{IEEEproof}\ignorespacesafterend}
\title{\vspace{-1em}
{\large Accepted to IEEE Transactions on Robotics (TRO)\\[1em]}Double Oracle Algorithm for \\ Game-Theoretic Robot Allocation on Graphs}
\author{Zijian An and Lifeng Zhou*
\thanks{Zijian An and Lifeng Zhou are with the Department of Electrical and Computer Engineering, Drexel University, Philadelphia, PA 19104, USA. Email: \texttt{\small \{za382,lz457\}@drexel.edu}.}
\thanks{This research was sponsored by the Drexel Seaman Endowed Fellowship.}
\thanks{* Corresponding author.}
}
\begin{document}
\bstctlcite{IEEEexample:BSTcontrol}
\maketitle
\begin{abstract}

    We study the problem of game-theoretic robot allocation where two players strategically allocate robots to compete for multiple sites of interest. Robots possess offensive or defensive capabilities to interfere and weaken their opponents to take over a competing site. This problem belongs to the conventional Colonel Blotto Game. Considering the robots' heterogeneous capabilities and environmental factors, we generalize the conventional Blotto game by incorporating heterogeneous robot types and graph constraints that capture the robot transitions between sites. Then we employ the Double Oracle Algorithm (DOA) to solve for the Nash equilibrium of the generalized Blotto game. Particularly, for cyclic-dominance-heterogeneous (CDH) robots that inhibit each other, we define a new transformation rule between any two robot types. Building on the transformation, we design a novel utility function to measure the game's outcome quantitatively. Moreover, we rigorously prove the correctness of the designed utility function. Finally, we conduct extensive simulations to demonstrate the effectiveness of DOA on computing Nash equilibrium for homogeneous, linear heterogeneous, and CDH robot allocation on graphs.
\end{abstract}

\begin{IEEEkeywords}
    Colonel Blotto Game on Graphs, Double Oracle Algorithm, Heterogeneous Robots, Nash Equilibrium
\end{IEEEkeywords}

\section{Introduction}
\label{sec:intro}
\IEEEPARstart{W}{ith} the advancement in computing, sensing, and communication, robots are increasingly used in various data collection tasks such as environmental exploration and coverage~\cite {zhou2023racer,tao2023seer,ahmadzadeh2008optimization,bhattacharya2014multi,ramachandran2020resilient,sharma2023d2coplan}, surveillance and reconnaissance~\cite{grocholsky2006cooperative,thakur2013planning}, and target tracking~\cite{dames2017detecting,zhou2018active,zhou2018resilient,zhou2019sensor,mayya2022adaptive,zhou2022graph,zhou2023robust}. These tasks are typically replete with rival competition. For example, in oil, ocean, and aerospace exploration, multiple competitors tend to deploy their specialized robots to cover and occupy sites of interest~\cite{reynolds2001oil}. This paper models this type of competition as a game-theoretic robot allocation problem, where two players allocate robots possessing defensive or offensive capabilities to compete for multiple interested sites. 

Game-theoretic robot allocation belongs to the class of resource allocation problems \cite{myerson1993incentives, kovenock2012coalitional,kovenock2010conflicts,berman2009optimized,prorok2017impact}. In the realm of game theory, these problems are typically modeled as the Colonel Blotto game, which was first introduced by Borel \cite{borel1921theorie} and further discussed in \cite{borel1953theory}. In the Colonel Blotto game, the two players allocate resources across multiple strategic points, with the player allocating more resources at a given strategic point being deemed the winner of that point. The seminal study from Borel and Ville has computed the equilibrium for a Colonel Blotto Game with three strategic points and equivalent total resources~\cite{borel1938applications}. In 1950, Gross and Wagner extended these findings to scenarios involving more than three strategic points~\cite{gross1950continuous}. Later, Roberson addressed the equilibrium problem for continuous Colonel Games under the premise of unequal total resources~\cite{roberson2006colonel}. Subsequent refinements to this continuous version emerged in \cite{behnezhad2017faster} and \cite{ahmadinejad2019duels}.
\begin{figure}[!tbp]
    \centering
    \includegraphics[width=2.9in]{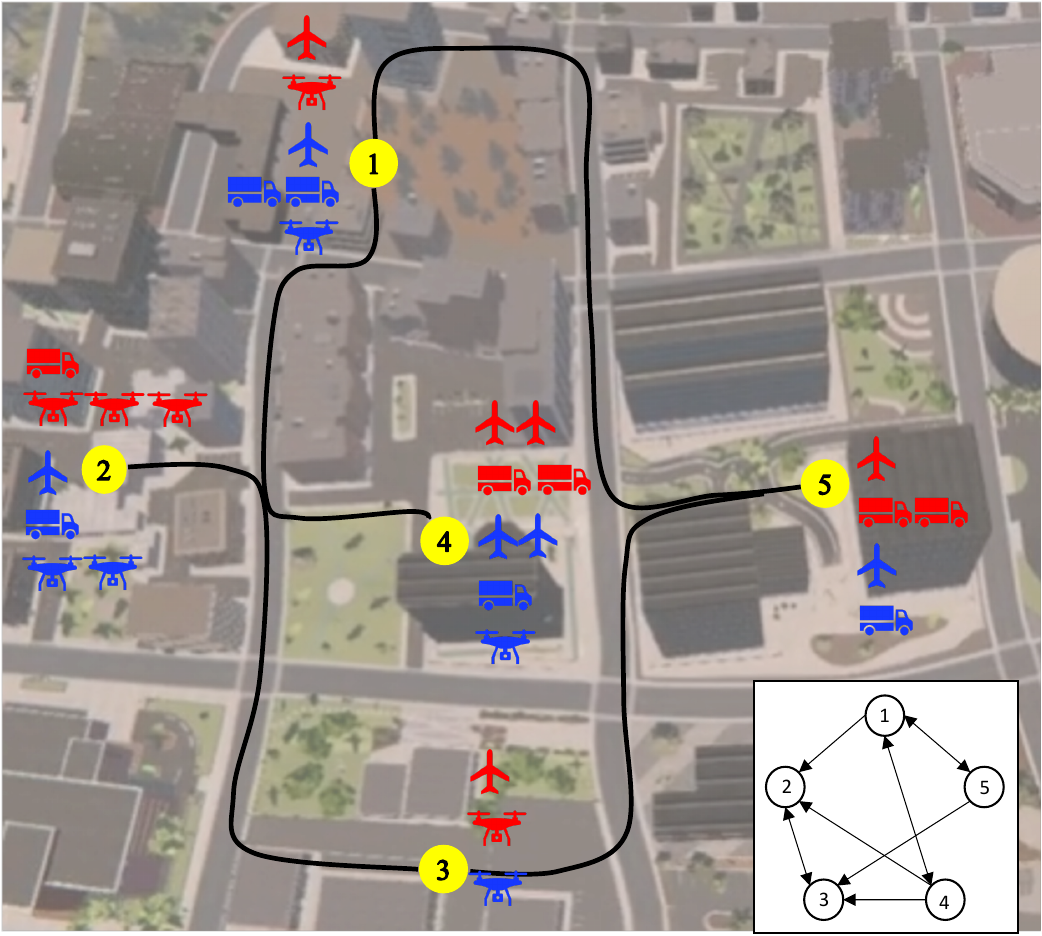}
    \caption{Game-theoretic robot allocation on a graph. Two players (``red" and ``blue") allocate three types of robots to compete for five sites in an area. The area is abstracted into a graph with five sites as the nodes and their connections as edges.} 
    \label{fig: map}
\end{figure}

Nevertheless, in addressing the problem of game-theoretic robot allocation, research on the conventional Blotto game exhibits several limitations. First, these studies have mainly focused on unilateral or homogeneous resources \cite{roberson2006colonel, behnezhad2017faster, ahmadinejad2019duels}. However, the robots can be heterogeneous and have different capabilities, which means the number of robots is not the only criterion of win and loss. Second, considering the environmental constraints and the motion abilities of the robots, there are sites between which no direct routes are accessible, e.g., the aerial robots are not allowed to enter no-fly zones and the wheeled robots may not be able to traverse terrains with sharp hills, tall grass, or mud puddles. Therefore, given these real-world constraints and limitations, the environment in which the robots operate is a non-fully connected graph where robots can transition only between connected sites. However, none of the previous research on resource allocation games involves different types of resources and graph constraints. Therefore, in this paper, we primarily focus on game-theoretic robot allocation on graphs with multiple types of robots (\figref{fig: map})\footnote{This problem could be more generic and go beyond robotics, such as the confrontation between different military forces in the war, the competition among different species populations in nature, and the investment over different financial products in economics.}.

First, the game requires both players to allocate robots under graph constraints, i.e., the robots can only be transitioned from one node to the other if there exists an edge between the two nodes. In practical scenarios, the resource allocation among nodes often demands consideration of the relative positions and connectivity of these points, as highlighted by \cite{shishika2022dynamic, jain2011double}, in discussions about offensive and defensive security games. 
However, in \cite{shishika2022dynamic}, the two players are asymmetric, with distinct roles as attacker and defender. The attacker's objective is to compromise critical nodes to gain access to a high-value target, with their win condition being the capture of any one of these nodes. Conversely, the defender's goal is to protect all critical nodes successfully. In contrast, our setting builds on the conventional Colonel Blotto game, where both players are symmetric. This means that they take actions simultaneously, and the player who controls more nodes emerges victorious.
In \cite{jain2011double}, the main focus is the network security game where the defender places resources to catch the attacker instead of a Colonel Blotto Game. Specifically, we consider distinct nodes to share unidirectional connections. Thus, robots cannot be directly transited between disconnected points. This setting morphs the allocation game into an asymmetric game, which is detailed further in the subsequent sections.

Second, we consider that both players allocate multiple types of robots that may inhibit each other, termed cyclic dominance heterogeneous
 (CDH) \cite{szolnoki2014cyclic}. In \cite{behnezhad2017faster}, the multiple-resource Colonel Blotto game has been introduced, but it considers different resources to be completely independent of each other. Different from that, For cyclic dominance heterogeneous (CDH) resources, any two types of resources hold a restrictive relationship as in the classic game of Rock-Paper-Scissors. This relationship is widely studied in evolutionary dynamics in biology science. For instance, Nowak et al.~ \cite{nowak2004evolutionary} utilized the examples of \emph{Uta stansburiana} lizard and \emph{Escherichia coli} to illustrate this mutually restrictive relationship. Similarly, an example of CDH robots could be three types of functional robots---cyber-security robots, network intrusion robots, and combat robots.
The cyber-security robot is primarily designed for defensive operations, specializing in counteracting cyber threats. It is equipped with advanced cybersecurity algorithms, enabling it to detect and neutralize attacks from network intrusion robots. 
The network intrusion robot excels in offensive cyber operations. This robot is capable of executing sophisticated network attacks, targeting vulnerabilities in other robots, particularly those relying on network-based controls such as combat robots.
The combat robot is engineered for physical combat and tactical defense. It is armed with state-of-the-art weaponry and robust armor, making it highly effective in direct physical confrontations such as with the cyber-security robot.
In other words, a cyber-security robot is capable of defending against network attacks from a network intrusion robot, but it can be physically destroyed by a combat robot. Meanwhile, a network intrusion robot can easily paralyze the network systems of a combat robot, causing it to crash, but it can be defeated by a security robot. In \cite{claussen2008cyclic}, this relationship is represented in matrix form and incorporated into the replicator dynamics. We use the concept of cyclic dominance to distinguish different types of mutual-restrictive robots. Based on it, we formulate a utility function to quantitatively evaluate the outcomes of the game.

We utilize the double oracle algorithm (DOA) to compute the equilibrium of the Colonel Blotto game on graphs. This algorithm was initially proposed by \cite{mcmahan2003planning} and has since been extensively adopted across various domains within game theory. In \cite{bosansky2012iterative} and \cite{bosansky2013double}, DOA was employed to solve two-player zero-sum sequential games with perfect information, with its performance verified on adversarial graph search and simplified variants of Poker. A theoretical guarantee on the convergence of the DOA was also provided. Additionally, Adam et al.~\cite{adam2021double} employed DOA for the conventional Colonel Blotto Game, juxtaposing it with the fictitious play algorithm and identifying a more rapid convergence rate for the former. Differently, we introduce graph constraints and CDH robot types into the Colonel Blotto game, which requires redefining the optimization equations and solving these newly defined equations in the DOA.

\textbf{Contributions.} We make three main contributions as follows: 
\begin{itemize}
    \item \textit{Problem formulation:} We formulate a game-theoretic robot allocation problem where two players allocate multiple types of robots on graphs. 
    \item \textit{Approach:} We leverage the DOA to calculate Nash equilibrium for the games with homogeneous, linear heterogeneous, and CDH robots. Particularly, for the CDH game, we introduce a new transformation rule between different types of robots. Based on it, we design a novel utility function and prove that the utility function is able to rigorously determine the winning conditions of two players. We also linearize the constructed utility function and formulate a mixed integer linear programming problem to solve the best response optimization problem in DOA. 
    \item \textit{Results:} We demonstrate that players' strategies computed by our approach converge to the Nash equilibrium and are better than other baseline approaches.
\end{itemize}

Overall, our main contribution is the construction of a novel utility function for the on-graph CBG with heterogeneous resources, and the application of DOA to calculate the equilibrium of this newly formulated game. To the best of our knowledge, we are the first to address the challenges of graph constraints and CDH robots in the field of the Colonel Blotto game. Furthermore, the simulation results validate that our approach achieves the Nash equilibrium.

\section{Conventions and Problem Formulation}
\label{sec:problem}
In this section, we first introduce game-theoretic robot allocation on graphs. The problem can be modeled as a Colonel Blotto game. Then we introduce the Colonel Blotto game including pure and mixed strategies and the equilibrium. Finally, we present the main problems to address in this paper. 

\subsection{Robot Allocation between Two Players on Graphs}
\label{ssec: graph}
The environment where two players allocate robots can be abstracted into a graph as shown in \figref{fig: map}. An ordered pair $G=(\mathcal{V},\mathcal{E})$ can be used to define an \emph{undirected graph}, comprising set of nodes $\mathcal{V}$ and set of edges $\mathcal{E}:=\{\{x,y\} \ | \ x,y\in \mathcal{V}\}$. Particularly, a \textit{complete graph} is a simple undirected graph in which every pair of distinct nodes is connected by a unique edge~\cite{bang2018basic}. Replacing nodes pair $\{x,y\}$ in $\mathcal{E}$ by ordered sequence of two elements $(x,y)$ in $\mathcal{V}$ leads to \emph{directed graph}, or \emph{digraph} with $\mathcal{E}:= \{(x,y)\ |\ (x,y) \in \mathcal{V}^2\}$ a set of edges. For an edge $\varepsilon=(x,y)$ directed from node $x$ to $y$, $x$ and $y$ are called the endpoints of the edge with $x$ the tail of $\varepsilon$ and $y$ the head of $\varepsilon$.  A directed graph $G$ is called \emph{unilaterally connected} or \emph{unilateral} if $G$ contains a directed path from $x$ to $y$ or a directed path from $y$ to $x$ for every pair of nodes $\{x,y\}$ \cite{gomez2003unilaterally}. \par
Previous studies on Colonel Blotto games do not incorporate graphs~\cite{behnezhad2017faster,ahmadinejad2019duels,adam2021double}. If the same logic is applied, they can be seen as the games conducted on a complete graph since there is no limitation on the robot transitions between any two nodes. In this paper, we focus on the Colonel Blotto games on unilateral graphs where the robots must be transitioned on the edges. 
Graph nodes are the strategic points where two players can allocate robots. We denote the number of nodes as $N$. 
We model the robot allocation by its two characteristics---\textit{type} denoted as $R$, and \textit{number} denoted as $A$. If there are $M$ robot types i.e., $R_1, R_2, \cdots, R_M$, the number of robots for each type can be denoted by $A_1, A_2, \cdots, A_M$, respectively. In the paper, $M=1$ denotes \textit{homogeneous robot allocation}  and $M>1$ denotes \textit{heterogeneous robot allocation}. To start the game of robot allocation, there must be an initial robot distribution for both players. Denote $\mathbf{d}_0^x(R_i)$ and $\mathbf{d}_0^y(R_i)$ as the initial distribution of $i$-th type robots from Player 1 and Player 2, respectively. Notably, $\mathbf{d}_0^x(R_i)$ (or $\mathbf{d}_0^y(R_i)$) is a vector of $N$ entries, each of which stands for the $i$-th type robots allocated on a particular node. \figref{fig: homo_ex} demonstrates an example of homogeneous robot allocation. The initial robot distribution of Player 1 (red)  is $[1,0,2,2,0]$. Then she translates one robot from node 3 to node 2 and another from node 4 to node 1.  The robot distribution becomes $[2,0,2,1,0]$. Since $(1,4),(1,5)\in \mathcal{E}$, this allocation is valid. Meanwhile, the robot allocation of Player 2 (blue) after one step is $[1,1,0,2,1]$. If the rule stipulates that the player who owns more robots on a node wins that node, then the outcome is that Player 2 wins. That is because she wins nodes $2,4,5$ while Player 1 only wins nodes $1,3$. \par
\begin{figure}[!tbp]
    \centering
    \subfloat[Homogeneous robot allocation at time steps $t$ and $t+1$]{%
    \label{fig: homo_ex}
    \includegraphics[width=2.9in]{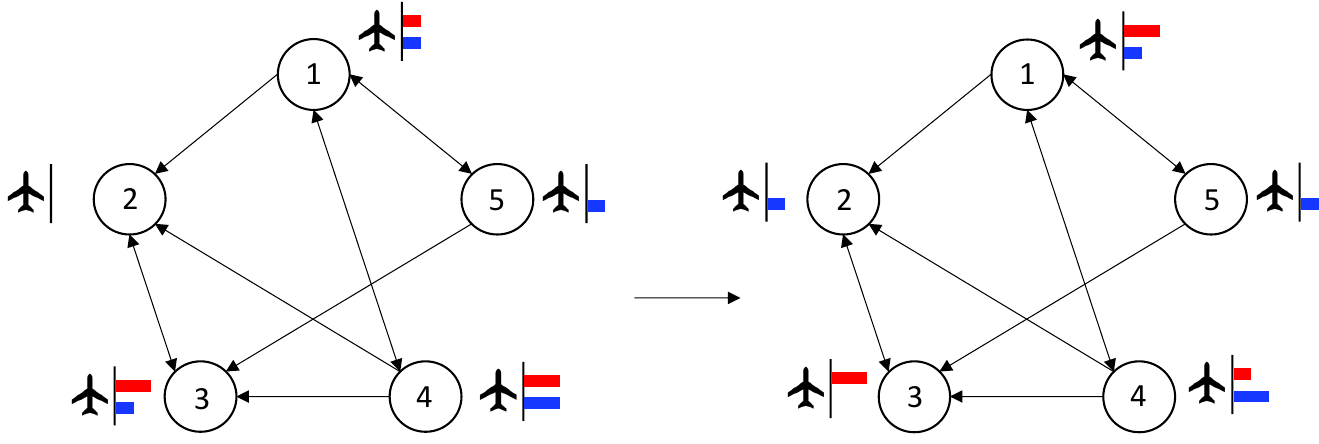}
    }%
    
    \subfloat[Heterogeneous robot allocation for two players with Player 1's mixed strategy and Player 2's pure strategy]{%
    \label{fig: heter_ex}
    \includegraphics[width=2.9in]{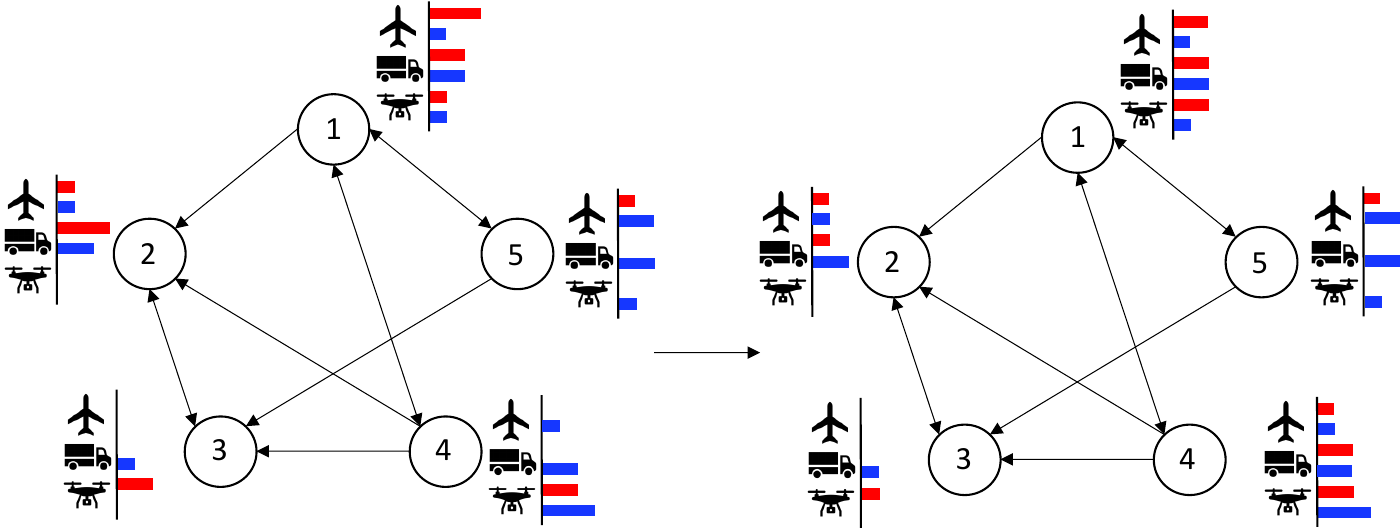}
    }%
    \caption{Examples of allocation games with (a) homogeneous and (b) heterogeneous robots. The number of robots allocated on each node is demonstrated as the length of the color bar, with red representing Player 1, and blue representing Player 2. In (a), at time $t$ (left), Player 1 wins node 3 and Player 2 wins node 5. At time $t+1$ (right). Player 1 moves one robot from node 4 into node 1, and Player 2 moves one robot from node 3 to node 2. Thus Player 1 wins nodes 1 and 3 while Player 2 wins the remaining nodes. In (b), Player 1 adopts mixed strategy $\mathbf{\Delta}_X \sim \text{Binomial}(0.4, 0.6)$ and Player 2 adopts pure strategy $\mathbf{\Delta}_Y=\mathbf{S}_y^1$. The win or loss cannot be claimed solely based on the number of robots because of the CDH robots.}
    \label{fig: ex} 
\end{figure}
\subsection{Colonel Blotto Game}
\label{ssec: equilibrium}
A \emph{Colonel Blotto Game} is a two-player constant-sum game in which the players are tasked to simultaneously allocate limited resources over several strategic nodes \cite{borel1921theorie}. Each player selects strategies from a nonempty strategy set $\mathcal{X}=\mathcal{X}(G),~\mathcal{Y}=\mathcal{Y}(G)$, where $\mathcal{X},~\mathcal{Y} \subseteq \{\mathbf{S}_{M\times N}\ |\ 0\leq \mathbf{S}_{ij} \leq A_i, \sum_j \mathbf{S}_{ij}=A_i , i\in \{1,2,\cdots,M\}, j\in \{1,2,...,N\}\}$. $\mathbf{S}_{ij}$ stands for amount of $i$-th resource allocated on $j$-th node. Each player picks their strategies combination $\hat{\mathcal{X}}\subseteq \mathcal{X}, \hat{\mathcal{Y}}\subseteq \mathcal{Y}$ over probability distribution $\mathbf{P}_x$ and $\mathbf{P}_y$, respectively. The probability of strategy $\mathbf{S}_x$ adopted by Player 1 is denoted as $P_x(\mathbf{S}_x)$. Analogously for Player 2, it is denoted as $P_y(\mathbf{S}_y)$.  This strategy's combination over the probability distribution is called \textit{mixed strategy}, denoted by $\mathbf{\Delta}_X$ and $\mathbf{\Delta}_Y$. The number of strategies in a mixed strategy set is denoted by $K$. For example, if Player 1 decides to apply the mixed strategy of $K_x$ strategies from $\mathcal{X}$, then $\mathbf{\Delta}_X = \mathbf{P}_x\times \mathbf{S}_x \in \mathbb{R}^{M\times (N\times K_x)}$. If $K=1$, the strategy is called \emph{pure strategy}. \figref{fig: heter_ex} illustrates an example of heterogeneous robot allocation that expands from the case of homogeneous robot allocation (\figref{fig: homo_ex}). There are three types of robots, i.e., $M=3$ and each type has five robots, i.e., $A_1=A_2=A_3=5$. There are five nodes in the graph, i.e., $N=5$. For Player 1, 
\begin{equation*}
    \mathbf{S}_x^1 =\begin{bmatrix}
        3 & 1 & 0 & 0 & 1\\
        2 & 3 & 0 & 0 & 0\\
        1 & 0 & 2 & 2 & 0\\
    \end{bmatrix},\quad
    \mathbf{S}_x^2 =\begin{bmatrix}
        2 & 1 & 0 & 1 & 1\\
        2 & 1 & 0 & 2 & 0\\
        2 & 0 & 1 & 2 & 0\\
    \end{bmatrix}.
\end{equation*}
and for Player 2,
\begin{equation*}
    \mathbf{S}_y^1 =\begin{bmatrix}
        1 & 1 & 0 & 1 & 2\\
        2 & 2 & 0 & 1 & 0\\
        1 & 0 & 0 & 3 & 1\\
    \end{bmatrix}.
\end{equation*}
If the probability of taking the first strategy and the second strategy is respectively 0.4 and 0.6, i.e., $P_x(\mathbf{S}_x^1)=0.4, P_x(\mathbf{S}_x^2)=0.6$,
then mixed strategy for Player 1 is a Binomial Distribution $$(\mathbf{S}_x^1, \mathbf{S}_x^2)\sim \text{Binomial}(0.4, 0.6).$$
For the sake of brevity, we denote this distribution as $\mathbf{\Delta}_X$.
While Player 2 takes pure strategy with $\mathbf{P}_y(\mathbf{S}_y^1)=1$, thus $\mathbf{\Delta}_Y=\mathbf{S}_y^1$.
Notably, the mixed strategy comprises elements from compact strategy sets $\mathcal{X}$ and $\mathcal{Y}$. For each strategy element $\mathbf{S}_x$ and $\mathbf{S}_y$, the \emph{utility function} is a continuous function measuring the outcome of the game $u=u(\mathbf{S}_x, \mathbf{S}_y): \mathcal{X}\times \mathcal{Y} \rightarrow \mathbb{R}$ for Player 1, while for Player 2 utility function is $-u$ if the game is zero-sum. In classic Colonel 
Blotto game (i.e., homogeneous robot allocation), the utility function can be modeled as a sign function since the player allocating more robots to a node wins that node. The utility is the total number of strategic points the player wins, as shown in Equation~\ref{eq:utility}. 
\begin{align}
    \label{eq:utility}
     u(\mathbf{S}_x, \mathbf{S}_y) = \sum_{i=1}^{N}{\texttt{sgn}(\mathbf{S}_{x,i} - \mathbf{S}_{y,i})}.
\end{align}

\begin{align*}
\text{where}\quad \texttt{sgn}(x) = \left\{
        \begin{aligned}
        -1&, &x&\leq-C;\\
        x/C&, &x&\in [-C, C ];\\
        1&, &x&\geq C.
        \end{aligned}
        \right.
\end{align*}
$C$ is a positive constant to maintain the continuity of the function. Here $\mathbf{S}_x$ and $\mathbf{S}_y$ are both $N$-dimension vectors since the classic Colonel Blotto game is a homogeneous game where $M=1$ (mentioned in section \ref{ssec: graph}), and $\mathbf{S}_{x, i}, \mathbf{S}_{y, i}$ refer to $i$-th entry of $\mathbf{S}_x$ and $\mathbf{S}_y$. However, in games with heterogeneous resources, the utility functions can be more complicated.\par

The triple $\mathbb{C}=(\mathcal{X}(G),\mathcal{Y}(G),u)$ is called a \emph{continuous game}. In a continuous game, given two players' mixed strategies
$\mathbf{\Delta}_X$ and $\mathbf{\Delta}_Y$, \emph{expected utility} of Player 1 (expected loss of Player 2) is defined as 

\begin{align*}
U(\mathbf{\Delta}_X, \mathbf{\Delta}_Y) &= \sum_{\mathbf{S}_x\in \hat{\mathcal{X}}}\sum_{\mathbf{S}_y\in \hat{\mathcal{Y}}}P_x(\mathbf{S}_x)P_y(\mathbf{S}_y)u(\mathbf{S}_x, \mathbf{S}_y).
\end{align*}

\begin{lemma}
\itshape
\label{lemma:equilibrium}
A mixed strategy group $(\mathbf{\Delta}_X^*, \mathbf{\Delta}_Y^*)$ is \emph{equilibrium} of a continuous game $\mathbb{C}$ if, for all $(\mathbf{\Delta}_X, \mathbf{\Delta}_Y)$,
    \begin{align*}
        U(\mathbf{\Delta}_X, \mathbf{\Delta}_Y^*) \leq U(\mathbf{\Delta}_X^*, \mathbf{\Delta}^*_Y) \leq  U(\mathbf{\Delta}_X^*, \mathbf{\Delta}_Y).
    \end{align*}
\end{lemma}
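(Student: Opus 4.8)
The plan is to read the displayed double inequality as a \emph{saddle-point} condition on $U$ and to show it is exactly the statement that neither player can improve by a unilateral deviation, which is what it means for $(\mathbf{\Delta}_X^*,\mathbf{\Delta}_Y^*)$ to be an equilibrium. Since $\mathbb{C}$ is a two-player constant-sum game, Player~1's expected payoff is $U$ while Player~2's is its complement (namely $-U$ in the zero-sum case, or $c-U$ for a total of $c$); hence maximizing Player~2's payoff against a fixed $\mathbf{\Delta}_X$ is the same as \emph{minimizing} $U(\mathbf{\Delta}_X,\cdot)$. I would first record these two payoff conventions explicitly, because every later step hinges on the sign reversal between the two players.

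Next I would split the equilibrium requirement into the two best-response conditions. For Player~1, holding Player~2 fixed at $\mathbf{\Delta}_Y^*$, no deviation helps iff $U(\mathbf{\Delta}_X,\mathbf{\Delta}_Y^*)\le U(\mathbf{\Delta}_X^*,\mathbf{\Delta}_Y^*)$ for every $\mathbf{\Delta}_X$, which is precisely the left-hand inequality in the display. For Player~2, holding Player~1 fixed at $\mathbf{\Delta}_X^*$, no deviation helps iff her own payoff at $\mathbf{\Delta}_Y^*$ is maximal; translating this through the sign reversal of the previous paragraph turns it into $U(\mathbf{\Delta}_X^*,\mathbf{\Delta}_Y^*)\le U(\mathbf{\Delta}_X^*,\mathbf{\Delta}_Y)$ for every $\mathbf{\Delta}_Y$, which is the right-hand inequality. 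Conjoining the two one-sided conditions reproduces the displayed chain, so the saddle-point inequalities and the mutual-best-response property are equivalent.

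From here the conclusion is immediate: if $(\mathbf{\Delta}_X^*,\mathbf{\Delta}_Y^*)$ satisfies the displayed inequalities for all $(\mathbf{\Delta}_X,\mathbf{\Delta}_Y)$, then both best-response conditions hold at once, so neither player gains from deviating and the pair is an equilibrium of $\mathbb{C}$, which is the claimed implication. I would also note the converse for completeness, obtained by running the same two steps in reverse, since the correspondence established above is an equivalence rather than a one-way implication.

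The step I expect to carry the real (if modest) weight is the second one, specifically the translation of Player~2's best-response condition into an inequality on $U$ rather than on her own payoff. The only thing to watch is the direction of the inequality after the sign reversal, together with the remark that the additive constant in the constant-sum bookkeeping does not change which strategy is optimal for Player~2; once that is pinned down, the remainder is just assembling the two one-sided conditions. Notably, no fixed-point or minimax machinery is required here, because the lemma characterizes a \emph{given} pair satisfying the saddle-point inequalities rather than asserting the existence of one.
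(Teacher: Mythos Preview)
Your argument is mathematically sound: in a two-player constant-sum game, the saddle-point inequalities are precisely the pair of best-response conditions, and your translation of Player~2's condition via the sign reversal is handled correctly. However, there is nothing in the paper to compare against. The paper states this lemma without proof; in fact it functions as the paper's \emph{definition} of equilibrium (no independent definition of Nash equilibrium is given beforehand, and the text immediately proceeds to cite Glicksberg for existence). So your proposal supplies a justification that the paper itself does not attempt, by linking the saddle-point formulation to the informal mutual-best-response notion. That is perfectly reasonable as an explanatory gloss, but be aware that from the paper's point of view this lemma is a primitive rather than a derived result, and no proof is expected.
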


According to \cite{glicksberg1952further}, every continuous game has at least one equilibrium.
\subsection{Problem Formulation}
\label{ssec: problem formulation}
We consider a two-player game-theoretic robot allocation on a connected directed graph $G=(\mathcal{V}, \mathcal{E})$ with $| \mathcal{V} |=N$. Suppose each player has $M$ types of robots $R_1, R_2, \cdots, R_M$ with corresponding quantity $A_1, A_2, \cdots, A_N$. 
We consider the game from timestep $t$ to timestep $t+1$. Without loss of generality, we set the robot distribution at time $t$ as a pure distribution, denoted as $\mathbf{d}_x$ and $\mathbf{d}_y$. The initial distribution refers to the distribution at the current time $t$ throughout the rest of the paper.
We first introduce three basic assumptions.
\begin{assumption}
    We assume that the transition of robots between two connected nodes can be accomplished within one step. 
\end{assumption}
\begin{assumption}
    We assume that both players know each other's current robot distribution.
\end{assumption}
\begin{assumption}
    We assume that on all nodes, robots of the same type engage in combat first, followed by combat between different types of robots.
\end{assumption}
We introduce the third assumption based on the rules of the conventional Colonel Blotto game that involves only a single type of robot. A more detailed explanation will be provided in Section~\ref{ssec:NLH-heter}. With these three assumptions, we aim to compute the equilibrium (i.e., optimal mixed strategies for the two players) for two versions of the game-theoretic robot allocation on graphs.

\begin{problem}[\textit{Allocation game with homogeneous and linear hetergeneous robots}]\label{prob:1}
     All robots are homogeneous (i.e., $M=1$). The win or loss at a given node is determined purely by the number of robots allocated. The utility function for the players is introduced in \eqref{eq:utility}, where $C$ is a continuous factor ensuring the smoothness and continuity of the function. This can be extended into the case of robot allocation with linear heterogeneity, i.e., $M>1$ and there exists a linear transformation between different robot types, e.g., $R_i = a R_j$ with $a$ a positive constant.
\end{problem}


\begin{problem}[\textit{Allocation game with CDH robots}]\label{prob:3}
    Robots are heterogeneous, i.e., $M>1$, and they mutually inhibit one another. This is commonly referred to as cyclic dominance~\cite{szolnoki2014cyclic}.
    In CDH robot allocation, the number of robots on a node can no longer determine the winning condition of that node since the combination of heterogeneous robots also plays an important role. Therefore new utility function is required to describe this inhibiting relationship, which is introduced in \secref{ssec:NLH-heter}.
\end{problem} 

\section{Preliminaries}
\label{sec:pre}
\subsection{Double Oracle Algorithm}
\label{ssec:do-alg}

\begin{algorithm}[!t]
    \caption{Double Oracle Algorithm}
    \textbf{Input: }
    \begin{itemize}
        \item Graph $G=(\mathcal{V},\mathcal{E})$; 
        \item Continuous game $\mathbb{C}=(\mathcal{X}(G),\mathcal{Y}(G),u)$;
        \item Initial nonempty strategy set $\mathcal{X}_0 \subseteq \mathcal{X}, \mathcal{Y}_0 \subseteq \mathcal{Y}$;
        \item threshold $\epsilon$.
    \end{itemize}

    \begin{algorithmic}[1]
        \State $i\leftarrow 0$;
        \While{$U_u-U_l>\epsilon$}
            \State $(\mathbf{\Delta}_X^{i*}, \mathbf{\Delta}_Y^{i*}) \leftarrow (\mathcal{X}_i, \mathcal{Y}_i, u)$ calculate equilibrium for subgame;\label{alg1.1}
            \State $(\mathbf{\Delta}_X^{i*}, \mathbf{\Delta}_Y^{i*})$ reject strategies with low probability;\label{alg1.3}        
            \State $\mathbf{S}_x^{i+1}, \mathbf{S}_y^{i+1} \leftarrow (\mathbf{\Delta}_X^{i*}, \mathbf{\Delta}_Y^{i*})$ find best response strategy;\label{alg1.2}
            \If{$\mathbf{S}_x^{i+1}\notin \mathcal{X}_i$ and $\mathbf{S}_y^{i+1}\notin \mathcal{Y}_i$}
                \State $\mathcal{X}_{i+1} = \mathcal{X}_i\ \bigcup \ \{\mathbf{S}_x^{i+1}\}, \mathcal{Y}_{i+1} = \mathcal{Y}_i\ \bigcup \ \{\mathbf{S}_y^{i+1}\}$;
                \State $U_l:=U(\mathbf{\Delta}_X^{i*}, \mathbf{S}_y^{i+1}), U_u:=U(\mathbf{S}_x^{i+1}, \mathbf{\Delta}_Y^{i*})$;
            \EndIf
            \State $i\leftarrow i+1$;
        \EndWhile
    \end{algorithmic}
    \textbf{Output: }
    \begin{itemize}
        \item $\epsilon$-equilibrium mixed strategy set $(\mathbf{\Delta}_X^*, \mathbf{\Delta}_Y^*)$ of game $\mathbb{C}$; 
    \end{itemize}
    \label{alg:doa}
\end{algorithm}
The double oracle algorithm was first presented in \cite{mcmahan2003planning}. Later, it was widely utilized to compute the equilibrium (or mixed strategies) for continuous games. \cite{jain2011double, bosansky2014exact}. To better understand the algorithm, we first introduce the notion of \emph{best response strategy}. The best response strategy of Player 1, given the mixed strategy $\mathbf{\Delta}_Y$ of Player 2, is defined as 
\begin{align}
\label{eq:bst_res_x}
    \delta_x(\mathbf{\Delta}_Y):=\{
    \mathbf{S}_x \in \mathcal{X} \ |\ U(\mathbf{S}_x, \mathbf{\Delta}_Y)=\max_{\mathbf{S}'_x \in \mathcal{X}} U(\mathbf{S}'_x, \mathbf{\Delta}_Y)
    \}.
\end{align}
It is Player 1's best pure strategy against her opponent (i.e., Player 2). Similarly, Player 2's best response strategy is
\begin{align}
\label{eq:bst_res_y}
    \delta_y(\mathbf{\Delta}_X):=\{
    \mathbf{S}_y \in \mathcal{Y} \ |\ U(\mathbf{\Delta}_X, \mathbf{S}_y)=\min_{\mathbf{S}'_y \in \mathcal{Y}} U(\mathbf{\Delta}_X, \mathbf{S}'_y)
    \}.
\end{align}

The double oracle algorithm begins from an initial strategy set $\mathcal{X}_0$ and $\mathcal{Y}_0$ for the two players, which are picked randomly from strategy space $\mathcal{X}$ and $\mathcal{Y}$. $\mathbb{C}_0=(\mathcal{X}_0, \mathcal{Y}_0, u)$ is a subgame of $\mathbb{C}=(\mathcal{X}, \mathcal{Y},u)$, and the equilibrium of this subgame can be calculated by linear programming, as in \algref{alg:doa}, \linref{alg1.1}. From the equilibrium of the subgame, notated as $\mathbf{\Delta}_X^{i*}$ and $\mathbf{\Delta}_Y^{i*}$ in $i$-th iteration, best response strategies for both players are computed as in \algref{alg:doa}, \linref{alg1.2}. If these strategies are not in the subgame strategy set, then add them into the set to form a bigger subgame, and calculate whether equilibrium is reached. According to \cite{adam2021double}, the equilibrium condition (Lemma~\ref{lemma:equilibrium}) is equivalent to the following lemma.
\begin{lemma}
\itshape
    A mixed strategy group $(\mathbf{\Delta}_X^*, \mathbf{\Delta}_Y^*)$ is  \emph{equilibrium} of a continuous game $\mathbb{C}$ if, for all $\mathbf{S}_x\in \delta_x(\mathbf{\Delta}^*_Y)$ and $\mathbf{S}_y\in \delta_y(\mathbf{\Delta}^*_X)$,
    \begin{align}\label{eq:equilibrium}
        U(\mathbf{S}_x, \mathbf{\Delta}_Y^*) = U(\mathbf{\Delta}_X^*, \mathbf{\Delta}_Y^*) = U(\mathbf{\Delta}_X^*, \mathbf{S}_y).
    \end{align}
    \label{lemma_2}
\end{lemma}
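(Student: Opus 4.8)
The plan is to show that the best-response condition \eqref{eq:equilibrium} is equivalent to---and in particular implies---the saddle-point definition of equilibrium in \lemmaref{lemma:equilibrium}. The single ingredient that does the real work is that the expected utility $U(\mathbf{\Delta}_X,\mathbf{\Delta}_Y)$ is affine in the mixing distribution of each player separately (being the expectation $\sum_{\mathbf{S}_x,\mathbf{S}_y}P_x(\mathbf{S}_x)P_y(\mathbf{S}_y)u(\mathbf{S}_x,\mathbf{S}_y)$), together with compactness of $\mathcal{X},\mathcal{Y}$ and continuity of $u$, which guarantee that $\delta_x(\mathbf{\Delta}_Y^*)$ and $\delta_y(\mathbf{\Delta}_X^*)$ are nonempty and that the maxima and minima below are attained.

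First I would record the ``a mixed best response is no better than the best pure response'' reduction: for a fixed $\mathbf{\Delta}_Y^*$ and any mixed strategy $\mathbf{\Delta}_X$ with distribution $P_x$ over support $\hat{\mathcal{X}}$,
\begin{align*}
U(\mathbf{\Delta}_X,\mathbf{\Delta}_Y^*)&=\sum_{\mathbf{S}_x\in\hat{\mathcal{X}}}P_x(\mathbf{S}_x)\,U(\mathbf{S}_x,\mathbf{\Delta}_Y^*)\\
&\le\max_{\mathbf{S}_x'\in\mathcal{X}}U(\mathbf{S}_x',\mathbf{\Delta}_Y^*),
\end{align*}
because a convex combination never exceeds the maximum; the bound is attained since any pure strategy is a degenerate mixed strategy, so $\max_{\mathbf{\Delta}_X}U(\mathbf{\Delta}_X,\mathbf{\Delta}_Y^*)=\max_{\mathbf{S}_x'\in\mathcal{X}}U(\mathbf{S}_x',\mathbf{\Delta}_Y^*)=U(\mathbf{S}_x,\mathbf{\Delta}_Y^*)$ for every $\mathbf{S}_x\in\delta_x(\mathbf{\Delta}_Y^*)$, the last equality being the definition of best response in \eqref{eq:bst_res_x}. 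The mirror-image statement for Player~2 replaces $\max$ by $\min$ and gives $\min_{\mathbf{\Delta}_Y}U(\mathbf{\Delta}_X^*,\mathbf{\Delta}_Y)=U(\mathbf{\Delta}_X^*,\mathbf{S}_y)$ for every $\mathbf{S}_y\in\delta_y(\mathbf{\Delta}_X^*)$. Substituting the hypothesis \eqref{eq:equilibrium} into these two identities turns them into $U(\mathbf{\Delta}_X^*,\mathbf{\Delta}_Y^*)=\max_{\mathbf{\Delta}_X}U(\mathbf{\Delta}_X,\mathbf{\Delta}_Y^*)$ and $U(\mathbf{\Delta}_X^*,\mathbf{\Delta}_Y^*)=\min_{\mathbf{\Delta}_Y}U(\mathbf{\Delta}_X^*,\mathbf{\Delta}_Y)$; reading off the two families of inequalities and chaining them gives $U(\mathbf{\Delta}_X,\mathbf{\Delta}_Y^*)\le U(\mathbf{\Delta}_X^*,\mathbf{\Delta}_Y^*)\le U(\mathbf{\Delta}_X^*,\mathbf{\Delta}_Y)$ for all $(\mathbf{\Delta}_X,\mathbf{\Delta}_Y)$, which is precisely the equilibrium condition of \lemmaref{lemma:equilibrium}.

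For the converse---to get the full equivalence claimed in the surrounding text---I would start from a saddle point $(\mathbf{\Delta}_X^*,\mathbf{\Delta}_Y^*)$: evaluating the left inequality of \lemmaref{lemma:equilibrium} at an arbitrary (pure) $\mathbf{S}_x\in\delta_x(\mathbf{\Delta}_Y^*)$ gives $U(\mathbf{S}_x,\mathbf{\Delta}_Y^*)\le U(\mathbf{\Delta}_X^*,\mathbf{\Delta}_Y^*)$, while the reduction above gives the reverse inequality $U(\mathbf{S}_x,\mathbf{\Delta}_Y^*)=\max_{\mathbf{S}_x'}U(\mathbf{S}_x',\mathbf{\Delta}_Y^*)\ge U(\mathbf{\Delta}_X^*,\mathbf{\Delta}_Y^*)$, hence equality; symmetrically for $\mathbf{S}_y$, which recovers \eqref{eq:equilibrium}. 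The argument is mostly bookkeeping once the linearity reduction is in place; the only point that needs genuine care---and the place I would spend attention---is justifying that the best-response sets are nonempty and the suprema attained, so that the equalities in \eqref{eq:equilibrium} are not vacuous, which I would pin on the compactness of $\mathcal{X},\mathcal{Y}$ and continuity of $u$ assumed in the model, the same hypotheses underlying the existence result of \cite{glicksberg1952further} cited above.
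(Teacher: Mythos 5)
Your proposal is correct: the reduction via linearity of the expected utility in each player's mixing distribution (so that no mixed strategy can beat the best pure response, with attainment guaranteed by compactness of $\mathcal{X},\mathcal{Y}$ and continuity of $u$) and then chaining the two resulting inequalities is exactly the standard argument for this equivalence. Note that the paper itself supplies no proof of Lemma~\ref{lemma_2}—it is imported from \cite{adam2021double}—so there is no in-paper argument to compare against; your write-up, including the converse direction matching the claimed equivalence with Lemma~\ref{lemma:equilibrium}, fills that gap in essentially the way the cited reference does.
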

$\max{U(\mathbf{\Delta}_X^*, \mathbf{S}_y)}$ is called upper utility of the game, denoted by $U_u$, and $\min{U(\mathbf{S}_x, \mathbf{\Delta}_Y^*)}$ is called lower utility of the game, denoted by $U_l$. Note that, in practice, \eqref{eq:equilibrium} is typically replaced by $\epsilon$-equilibrium equation for the simplicity of computation.
\begin{align*}
\label{eq:e-equilibrium}
 U(\mathbf{S}_x, \mathbf{\Delta}_Y^*)-\epsilon \leq U(\mathbf{\Delta}_X^*, \mathbf{\Delta}_Y^*) \leq  U(\mathbf{\Delta}_X^*, \mathbf{S}_y)+\epsilon.
\end{align*}
Therefore, the termination condition is $U_u-U_l>\epsilon$. In addition, since the output is the mixed strategy and too many redundant strategies with a low probability would slow down the algorithm, we delete strategies with a low probability, as shown in \algref{alg:doa}, \linref{alg1.3}.

The optimization problem in \algref{alg:doa}, \linref{alg1.1} is a linear programming (LP) problem. Define the \emph{utility matrix} over $\mathbf{S}_x$ and $\mathbf{S}_y$ by $\mathbf{U}$, with each entry $\mathbf{U}_{ij}$ being the utility of Player 1's $i$-th strategy verses Player 2's $j$-th strategy, i.e.,
\begin{align*}
    \mathbf{U}_{ij}=u(\mathbf{S}_x^i,\mathbf{S}_y^j). 
\end{align*}
Therefore, the expected utility can be calculated by
\begin{align*}
    U(\mathbf{\Delta}_X, \mathbf{\Delta}_Y)=\mathbf{P}_x^T\mathbf{U}\mathbf{P}_y.
\end{align*}
Suppose $\mathbf{U}$ is a $K_x\times K_y$ matrix (that is, Player 1's mixed strategy set containing $K_x$ strategies and Player 2's mixed strategy set containing $K_y$ strategies), the probability distribution of Player 1's strategies $\mathbf{P}_x$ can be calculated by
\begin{align}
    \begin{aligned}
            &\min_{\mathbf{P}_x,\ U_x\in\mathbb{R}}\ U_x\\
        s.t.\ 
            &\sum_{i=1}^{{K_x}}P_x(\mathbf{S}_x^i)=1,\\
            &P_x(\mathbf{S}_x^i)\geq0,\ \text{for}\ i\in\{1,2,\cdots,{K_x}\},\\
            &U_x\geq\sum_{i=1}^{K_x}P_x(\mathbf{S}_x^i)\mathbf{U}_{ij},\ \text{for}\ j\in\{1,2,\cdots,{K_y}\}.
    \end{aligned}
\end{align}
For Player 2, the probability distribution $\mathbf{P}_y$ can be calculated by
\begin{align}
    \begin{aligned}
            &\max_{\mathbf{P}_y,\ U_y\in\mathbb{R}}\ U_y\\
        s.t.\ 
        &\sum_{j=1}^{{K_y}}P_y(\mathbf{S}_y^j)=1,\\
        &P_y(\mathbf{S}_y^j)\geq0,\ \text{for}\ j\in\{1,2,\cdots,{K_y}\},\\
&U_y\leq\sum_{j=1}^{K_y}P_y(\mathbf{S}_y^j)\mathbf{U}_{ij},\ \text{for}\ i\in\{1,2,\cdots,{K_x}\}.
    \end{aligned}
\end{align}
These can be solved by \texttt{linprog} in Python.

The most crucial part of the double oracle algorithm is to compute the best response strategy (\algref{alg:doa}, \linref{alg1.2}). The graph constraints and various robot types we consider make the computation challenging. First, the strategy set $\mathcal{X}$ and $\mathcal{Y}$ in \eqref{eq:bst_res_x} and \eqref{eq:bst_res_y} depends on the graph $G$. Given that $G$ is unilaterally
connected, not all nodes can be reached within one step. Therefore, with a robot distribution $\mathbf{d}_x, \mathbf{d}_y$, reachable strategy sets $\mathcal{X}$ and $\mathcal{Y}$ need to be computed first. This problem was introduced in paper~\cite{shishika2022dynamic}, which we briefly summarize in Section~\ref{ssec:reachable set}. Second, an appropriate utility is essential for computing the best response strategy. However, for CDH robots, no such function exists. To this end, we construct a continuous utility function to handle CDH robots (\secref{ssec:NLH-heter}).
\subsection{Reachable Strategy Set}
\label{ssec:reachable set}
We leverage the notion of \emph{reachable set} for computing strategy sets $\mathcal{X}(G)$ and $\mathcal{Y}(G)$ on graph $G$. We define the \emph{adjacency matrix} $\mathbf{A}$  of a graph $G$ as:
\begin{equation*}
\label{eq:adjacency}
        \mathbf{A}_{ij} = \left\{
        \begin{aligned}
        0&, &(j,i)&\notin \mathcal{E};\\
        1&, &(j,i)&\in \mathcal{E}.
        \end{aligned}
        \right.
\end{equation*}
The adjacency matrix describes the connectivity of nodes on a graph. We denote the $i$-th type robots $R_i$ distributed on a graph at time $t$ by a $N$-dimension vector $\mathbf{d}_t(R_i)$ with $N$ the number of nodes and each element the number of this type robots on each node. Here we use the fraction of the robot population instead of the number of robots to represent the allocation amount. Therefore, for a distribution $\mathbf{d}$ with its $j$-th element denoted as $\mathbf{d}^j$, it satisfies $\sum_{j=1}^N\mathbf{d}^j=1,\ \mathbf{d}^j\geq 0.$ Then, we denote the \emph{Transition matrix} $\mathbf{T}$ as
\begin{equation*}
\label{eq:transition}
    \mathbf{d}_{t+1}(R_i) = \mathbf{T}\mathbf{d}_t(R_i).
\end{equation*}
Note that the transition matrix $\mathbf{T}$ has two characteristics.
\begin{itemize}
    \item $\sum_j\mathbf{T}_{ij}=1.$
    \item $\mathbf{T}_{ij}\geq 0, \ \mathbf{T}_{ij}=0 \quad \text{if} \  \mathbf{A}_{ij}=0.$
\end{itemize}
All valid transition matrices form \emph{admissible action space} $\widetilde{\mathcal{T}}$, i.e.,
\begin{equation*}
\label{eq:admissible}
    \widetilde{\mathcal{T}}=\{\mathbf{T}\in \mathbb{R}^{N\times N} \ |\ \sum_j\mathbf{T}_{ij}=1,\ \mathbf{T}_{ij}\geq 0, \ \mathbf{T}_{ij}=0\ \text{if} \  \mathbf{A}_{ij}=0\}.
\end{equation*}
To determine closed-form expression of $\widetilde{\mathcal{T}}$, define \emph{extreme action space} $\hat{\mathcal{T}}$ corresponding to the binomial allocation as
\begin{equation}
\label{eq:extreme}
    \hat{\mathcal{T}} = \{\mathbf{T}\in \mathbb{R}^{N\times N} \ |\ \mathbf{T}\in \widetilde{\mathcal{T}}, \mathbf{T}_{ij}\in \{0,1\} \}.
\end{equation}
Extreme action space $\hat{\mathcal{T}}$ is a finite set and its element forms the base of $\widetilde{\mathcal{T}}$. Therefore, admissible action space can be calculated as a linear combination of all elements in extreme action space. 
\begin{equation}
\label{eq:cls-ad}
    \widetilde{\mathcal{T}} = \{\mathbf{T}\in \mathbb{R}^{N\times N} \ |\ \mathbf{T}=\sum_{\mathbf{T}'\in \hat{\mathcal{T}}}\lambda_i\mathbf{T}', \sum_i\lambda_i=1,\lambda_i\in\mathbb{R}^+ \}.
\end{equation}
Then, given a distribution of the $i$-th robot type, $\mathbf{d}(R_i)$, the \emph{reachable set} for $i$-th robot type can be calculated as
\begin{equation}
\label{eq:reachable}
    \mathcal{R}(R_i) = \{\mathbf{d}\ |\ \mathbf{T}\in \widetilde{\mathcal{T}}, \mathbf{d}=\mathbf{T}\mathbf{d}(R_i) \}.
\end{equation}
Finally, the strategy set $\mathcal{X}$ for all types of robots can be expressed as a collection of their individual reachable sets. 
\begin{equation}
\label{eq:strategy set}
    \mathcal{X} = \{\mathbf{S}\in \mathbb{R}^{M\times N}\ |\ \mathbf{S}= [\mathbf{R}_1,\ \cdots,\mathbf{R}_M]^T, \mathbf{R}_i\in \mathcal{R}(R_i)\}.
\end{equation}
The reachable set $\mathcal{Y}$ can be calculated analogously.
 
\section{Approach}
\label{sec:approach}

In this section, we present solutions to compute the equilibrium (i.e., optimal mixed strategies for the two players) for Problem~\ref{prob:1} and Problem~\ref{prob:3}. 

\subsection{Approach for Problem~\ref{prob:1}}
\label{ssec:homo}
In Problem~\ref{prob:1}, two players allocate homogeneous robots (i.e., $M=1$) on graph $G$. We denote initial robot distribution as $\mathbf{d}_x$ and $\mathbf{d}_y$. We first calculate the reachable set of initial strategy sets $\mathcal{X}(\mathbf{d}_x)$ and $\mathcal{Y}(\mathbf{d}_y)$ by \eqref{eq:reachable} and \eqref{eq:strategy set}. 
\begin{align}
\begin{split}
\label{eq:reachable_set_for_homo}
    \mathcal{X}(\mathbf{d}_x) &= \{\mathbf{S}\in \mathbb{R}^{N}\ |\ \mathbf{T}\in \widetilde{\mathcal{T}}, \mathbf{S}=\mathbf{T}\mathbf{d}_x\},\\
    \mathcal{Y}(\mathbf{d}_y) &= \{\mathbf{S}\in \mathbb{R}^{N}\ |\ \mathbf{T}\in \widetilde{\mathcal{T}}, \mathbf{S}=\mathbf{T}\mathbf{d}_y\}.
\end{split}
\end{align} 
Then we utilize the Double Oracle algorithm (\algref{alg:doa}) to compute the optimal mixed strategies for the two players with the utility function introduced in \eqref{eq:utility}. Particularly, assume in $j$-th step $\mathbf{\Delta}_y^{j*}=[P_y(\mathbf{S}_y^1)\cdot \mathbf{S}_y^1, \ P_y(\mathbf{S}_y^2)\cdot \mathbf{S}_y^2,\ \cdots, \ P_y(\mathbf{S}_y^K)\cdot \mathbf{S}_y^K]$ is known, the best response strategy for Player 1 is determined by  
\begin{equation}
\label{eq:bst_st}
    \max_{\mathbf{S}_x \in \mathcal{X}(\mathbf{d}_x)}\sum_{i=1}^{K_y}P_y(\mathbf{S}_y^i)u(\mathbf{S}_x,\mathbf{S}_y^i).
\end{equation}
The approach to solve this problem will be discussed in \secref{ssec:doa_cdh}.


In the case of linear heterogeneity, two players allocate linear heterogeneous robots (i.e., $M>1$) on graph $G$. Then the strategy space $\mathbf{S}\in\mathbb{R}^{M\times N}$ becomes a $M\times N$-dimension matrix. 
Different robot types can be linearly transformed, i.e., $R_i=I_{ij}R_j$ by an intrinsic matrix $\mathbf{I}$, i.e.,
\begin{subequations}
\label{intrinsic cons}
\begin{align}
    &\mathbf{I}_{ij}\mathbf{I}_{ji}=1, \;\text{for} \; i\neq j, \;\mathbf{I}_{ii}=1,\label{9.a}\\
    &\mathbf{I}_{ik}\mathbf{I}_{kj}=\mathbf{I}_{ij}, \;\text{for any} \; i,j,k\in\{1,2,\cdots,M\}.\label{9.b}
\end{align}
\end{subequations}
The second equality ensures that the transformation between any two types of robots is reversible and multiplicative. Therefore, all types can be transformed to a specified type $f$, i.e., $R_f = \mathbf{I}_{fi}R_i$ for all $i\neq f, i\in\{1,2,\cdots, M\}$. 
This implies that the linear heterogeneous robots can be transformed into homogeneous robots. In other words, the linear heterogeneity can be converted to homogeneity. Specifically, after transformation, strategy space $\mathbf{S}'\in \mathbb{R}^N$ and initial robot distributions $\mathbf{d}'_x, \mathbf{d}'_y$ become $N$-dimension vectors. Then the reachable set of initial strategy sets $\mathcal{X}(\mathbf{d}'_x)$ and $\mathcal{Y}(\mathbf{d}'_y)$ can be calculated by \eqref{eq:reachable_set_for_homo}.
Therefore, if the relationships between robots are linear heterogeneous, the approach for homogeneous robot allocation can be directly applied.

\subsection{Approach for Problem~\ref{prob:3}}
\label{ssec:NLH-heter}
In Problem~\ref{prob:3}, two players allocate cyclic-dominance-heterogeneous (CDH) robots on graph $G$. In this paper, we specify that if robot type $R_i$ dominates robot type $R_j$, then $R_i$ is capable of neutralizing or overcoming a greater number of $R_j$ than its own quantity. With CDH robots, the linear intrinsic transformation between robot types is broken and not guaranteed to be closed anymore. For example, if $M=3$ and $R_1=2R_2, R_2=2R_3, R_3=2R_1$, by circularly converting $R_1$ to $R_2$, $R_2$ to $R_3$, and $R_3$ to $R_1$, the number of $R_1$ increases out of thin air. Therefore, we design a new transformation rule, named \textit{elimination transformation} between different types of CDH robots.

\noindent \textbf{Elimination transformation.} In CDH robot allocation, transformation is valid only when robots can be eliminated but not created between two players. For example, consider $\mathbf{I}_{12}=\mathbf{I}_{23}=\mathbf{I}_{31}=2$, and Player 1 allocates $\mathbf{S}_{x,i}=(R_1, 2R_2, 4R_3)$ and Player 2 allocates $\mathbf{S}_{y,i}=(3R_1, R_2,3R_3)$ on node $i$. According to \textit{Assumption 3}, we first cancel out the same types of robots, and the remaining robot distribution after subtraction on node $i$ is $\mathbf{S}_{x,i}-\mathbf{S}_{y,i}=(-2R_1, R_2, R_3)$. This means Player 1 wins the game since Player 1 can consume Player 2's $2R_1$ at the cost of $R_3$, and still has one $R_2$ to win node $i$. However, if we transform Player 2's $R_1$ to $R_2$, i.e., $-2R_1=-4R_2$, the remaining robots are $-4R_2 + R_2 + 0.5R_2 = -2.5 R_2$ with $0.5 R_2$ transformed from $R_3$, leading to Player 2's win. This process is wrong because we cannot transform $R_1$ to $R_2$ as we cannot create new type-2 robots out of type-1 robots. Therefore, robots can only be eliminated but not created. Following this rule, we can transform $-0.5R_1$ to cancel out $R_2$ and the remaining $-1.5R_1$ can be eliminated by $0.75 R_3$. In the end, we have the remaining robots as $0.25 R_3$, and thus Player 1 wins.

Based on \textit{Assumption 3}, we require robots of the same type to engage in combat first—meaning they are eliminated (the same as the setup in the conventional CBG~\cite{roberson2006colonel}) before considering interactions between different types. Without this same-type elimination, each player will intentionally use their robot types to attack the other's robot types that they dominate, leading to ambiguity in the game outcome. By eliminating identical robot types first, we can ensure clarity and consistency in the game outcome. For instance, suppose Player 1 allocates (2$R_1$, 2$R_2$, 2$R_3$), while Player 2 allocates (3$R_1$, 3$R_2$, 3$R_3$) on a node. 
If two players execute the simultaneous move (a property of Colonel Blotto game), Player 2 is the winner since she has more robots for each type.
However, without same-type elimination, Player 1 could manipulate this situation: he could use 1.5 $R_1$ to consume 3 $R_2$ of Player 2, 1.5 $R_2$ to consume 3 $R_3$ of Player 2, and 1.5 $R_3$ to consume 3 $R_1$ of Player 2. Then Player 1 would have (0.5$R_1$, 0.5$R_2$, 0.5$R_3$) left, resulting in an illogical win. In this case, Player 1 wins, which is clearly unreasonable, demonstrating that without same-type elimination, the game's result becomes ambiguous.
Therefore, the same-type elimination rule preserves the simultaneous move property and ensures a unique and logically consistent outcome\footnote{Same-type elimination is not the only way that ensures a unique outcome. For example, enforcing a fixed battle order between different types of robots can also guarantee uniqueness. However, this is out of the scope of this paper. We will investigate it in our future work.}. 

Based on the elimination transformation rule, we construct a novel utility function to decide on the outcome of the game. In particular, we consider three types of CDH robots (i.e., $M=3$) as in the case of Rock-Paper-Scissor.~\footnote{For $M\geq 4$, the operation becomes even more complex and differs from the case of $M=3$, which we explain by an example in the Appendix. We leave the case of $M\geq 4$ for future research.}  Notably, the inhibiting property of the CDH robot allocation implies that the intrinsic matrix satisfies $\mathbf{I}_{ij}>1$ for $(i,j)\in\{(1,2),(2,3),(3,1)\}$.

\subsubsection{Construction of outcome interface}
\label{sssec:pi_oi}
Based on the elimination transformation, we first construct an \textit{outcome surface} that distinguishes between the win and loss of the players, and then design a new utility function of the game in \secref{sssec:u_for_heter}. The outcome interface $\pi_{\texttt{oi}}(\mathbf{x})=\pi_{\texttt{oi}}((u,v,w))=0$ defined on $\mathbb{R}^3$ is the continuous surface between any two of the three concurrent lines:
\begin{subequations}
    \begin{align}
    l_1&: u+v/\mathbf{I}_{12}=0, w=0,\label{l_1}\\
    l_2&: v+w/\mathbf{I}_{23}=0, u=0,\label{l_2}\\
    l_3&: w+u/\mathbf{I}_{31}=0, v=0.\label{l_3}
    \end{align}
    \label{eq:lines}
\end{subequations}
Then we formally define the piecewise linear function $\pi_{\texttt{oi}}(\mathbf{x})$ according to \cite[Definition 2.1]{ovchinnikov2000max}. 
\begin{definition}[\cite{ovchinnikov2000max}]
\itshape
    Let $\Gamma$ be a closed convex domain in $\mathbb{R}^d$. A function $\pi: \Gamma \rightarrow \mathbb{R}$ is said to be piecewise linear if there is a finite family $\mathcal{Q}$ of closed domains such that $\Gamma=\cup\mathcal{Q}$ and if $\pi$ is linear on every domain in $\mathcal{Q}$. A unique linear function $g$ on $\mathbb{R}^d$ which coincides with $\pi$ on a given $Q\in\mathcal{Q}$ is said to be a component of $\pi$. Let $\mathcal{H}$ denote the set of hyperplanes defined by $g_i(\mathbf{x})=g_j(\mathbf{x})$ for $i<j$ that have a nonempty intersection with the interior of $\Gamma$.
    \label{def:piecewiselinear}
\end{definition}

With the definition of the piecewise linear function, we utilize \cite[Theorem 4.1]{ovchinnikov2000max} to design our outcome interface, which defines the demarcation between the win and loss. 
\begin{theorem}[\cite{ovchinnikov2000max}]
\itshape
    Let $\pi$ be a piecewise linear function on $\Gamma=\mathbb{R}^3$ and $\{g_1,\cdots,g_n\}$ be the set of its distinct components. There exist a family $\{F_j\}_{j\in J}$ of subsets of $\{1,2,\cdots n\}$ such that 
    \begin{equation}
        \pi(\mathbf{x})=\max_{j\in J}\min_{i\in F_j}g_i(\mathbf{x}),\quad \forall \mathbf{x}\in \Gamma .
    \label{max-min}
    \end{equation}
    Conversely, for any family of distinct linear function $\{g_1,\cdots,g_n\}$, the above formula defines a piecewise linear function.
    \label{theorem:piecewise-linear}
\end{theorem}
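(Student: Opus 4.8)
The plan is to prove the two directions separately. The converse is routine and I would dispatch it first: each $\min_{i\in F_j}g_i$ is a concave piecewise linear function, being a pointwise minimum of finitely many affine maps, and a pointwise maximum of finitely many piecewise linear functions is again piecewise linear; the common polyhedral refinement of their regions of affinity and of the regions on which a given index $j$ attains the maximum exhibits the finite family $\mathcal{Q}$ demanded by Definition~\ref{def:piecewiselinear}. The substance is the forward direction, and the dimension plays no role there — the argument below works verbatim on any closed convex $\Gamma\subseteq\mathbb{R}^d$, and is stated for $\mathbb{R}^3$ only because that is the case we need.

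For the forward direction I would isolate two elementary consequences of piecewise linearity of $\pi$ with distinct components $\{g_1,\dots,g_n\}$: (i) at \emph{every} point $\mathbf{x}$ one has $\pi(\mathbf{x})\in\{g_1(\mathbf{x}),\dots,g_n(\mathbf{x})\}$, since $\mathbf{x}$ lies in some $Q\in\mathcal{Q}$ on which $\pi$ agrees with a single component; and (ii) after refining $\mathcal{Q}$ by the hyperplane arrangement $\mathcal{H}$, the restriction of $\pi$ to any line segment is continuous and piecewise linear in one real variable, each affine piece coinciding with the restriction of some $g_i$. Guided by (i), I would set, for each $\mathbf{x}$,
\[
  F_{\mathbf{x}}:=\{\,i\ :\ g_i(\mathbf{x})\ge\pi(\mathbf{x})\,\},
\]
and observe that, by (i), the index realizing $\pi(\mathbf{x})$ belongs to $F_{\mathbf{x}}$, so $\min_{i\in F_{\mathbf{x}}}g_i(\mathbf{x})=\pi(\mathbf{x})$. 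The sets $F_{\mathbf{x}}$ are subsets of $\{1,\dots,n\}$, hence only finitely many are distinct; let $\{F_j\}_{j\in J}$ enumerate them. Everything then reduces to the single pointwise inequality
\begin{equation}
  \min_{i\in F_{\mathbf{x}}}g_i(\mathbf{y})\ \le\ \pi(\mathbf{y})\qquad\text{for all }\mathbf{x},\mathbf{y},
  \tag{$\star$}
\end{equation}
since combining $(\star)$ with $\min_{i\in F_{\mathbf{y}}}g_i(\mathbf{y})=\pi(\mathbf{y})$ yields $\pi(\mathbf{y})=\max_{j\in J}\min_{i\in F_j}g_i(\mathbf{y})$ pointwise, which is exactly \eqref{max-min}.

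To prove $(\star)$ I would argue by contradiction along the segment $\gamma(t)=(1-t)\mathbf{x}+t\mathbf{y}$, $t\in[0,1]$, writing $h_i(t)=g_i(\gamma(t))$ (affine in $t$) and $p(t)=\pi(\gamma(t))$ (continuous, piecewise linear by (ii)). Suppose $h_i(1)>p(1)$ for every $i\in F_{\mathbf{x}}$. The closed set $B=\{t:\ h_i(t)\le p(t)\text{ for some }i\in F_{\mathbf{x}}\}$ contains $0$ (apply (i) at $\gamma(0)=\mathbf{x}$) but not $1$; put $t^{\ast}=\max B<1$. Just to the right of $t^{\ast}$ a single component $g_m$ is active for $p$, so $h_m=p$ there; since $h_i>p$ there for every $i\in F_{\mathbf{x}}$, necessarily $m\notin F_{\mathbf{x}}$, whence $h_m(0)<p(0)$. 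On the other hand $t^{\ast}\in B$ supplies some $i^{\ast}\in F_{\mathbf{x}}$ with $h_{i^{\ast}}(t^{\ast})\le p(t^{\ast})=h_m(t^{\ast})$, while $h_{i^{\ast}}(0)\ge p(0)>h_m(0)$. Two affine functions of $t$ ordered $h_{i^{\ast}}>h_m$ at $t=0$ and $h_{i^{\ast}}\le h_m$ at $t=t^{\ast}$ satisfy $h_{i^{\ast}}\le h_m$ on all of $[t^{\ast},1]$, so $h_{i^{\ast}}\le h_m=p$ just to the right of $t^{\ast}$ — contradicting $h_{i^{\ast}}>p$. Hence $(\star)$ holds, and with it the theorem.

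I expect the heart of the difficulty to be exactly this verification of $(\star)$: pinning down the component $g_m$ active immediately to the right of $t^{\ast}$ and extracting the affine-crossing contradiction. By contrast the converse, fact (i), the finiteness of $\{F_{\mathbf{x}}\}$, and the easy equality $\min_{i\in F_{\mathbf{x}}}g_i(\mathbf{x})=\pi(\mathbf{x})$ are bookkeeping. A secondary point to watch is to invoke piecewise linearity only through (i)--(ii), so that the proof stays valid whether $\Gamma$ is all of $\mathbb{R}^3$ or a proper closed convex subset.
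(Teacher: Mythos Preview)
The paper does not supply its own proof of this theorem: it is quoted as \cite[Theorem~4.1]{ovchinnikov2000max} and used as a black box to build the outcome interface $\pi_{\texttt{oi}}$, so there is no in-paper argument to compare against. Your proof is correct, and it is in fact essentially Ovchinnikov's original argument: he too defines, for each point $a$, the index set $S_a=\{i:g_i(a)\ge\pi(a)\}$ (your $F_{\mathbf{x}}$), observes that only finitely many distinct such sets arise, and reduces the representation to the single inequality $\min_{i\in S_a}g_i(b)\le\pi(b)$, which he establishes by restricting to the segment $[a,b]$ and exploiting one-variable piecewise linearity. Your contradiction at $t^{\ast}$, isolating the component $g_m$ active immediately to the right and playing it off against an index $i^{\ast}\in F_{\mathbf{x}}$ via affine crossing, is a slightly more explicit rendering of Ovchinnikov's continuity step, but the overall structure is the same.
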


The expression on the right side in \eqref{max-min} is a Max-Min (lattice) polynomial in the variables $g_i$ \cite{ovchinnikov2000max}. 

We write the $\pi_{\texttt{oi}}(\mathbf{x})$ in Max-Min term by Theorem \ref{theorem:piecewise-linear}. Let $\{g_1, g_2, g_3\}$ be the set of components of $\pi_{\texttt{oi}}(\mathbf{x})$ on $\mathbb{R}^3$ with
\begin{subequations}
\begin{align}
        g_1(\mathbf{x})&= [1,\ \mathbf{I}_{23}\mathbf{I}_{31},\ \mathbf{I}_{31}]\cdot \mathbf{x},\label{g_1}\\
        g_2(\mathbf{x})&= [\mathbf{I}_{12},\ 1,\ \mathbf{I}_{12}\mathbf{I}_{31}]\cdot \mathbf{x},\label{g_2}\\
        g_3(\mathbf{x})&= [\mathbf{I}_{12}\mathbf{I}_{23},\ \mathbf{I}_{23},\ 1]\cdot \mathbf{x}.\label{g_3}
\end{align} 
\label{eq:g}
\end{subequations}
where $g_1(\mathbf{x})=0$, $g_2(\mathbf{x})=0$, and $g_3(\mathbf{x})=0$ represent the planes constituted by the intersection of $l_2$ in \eqref{l_2} and $l_3$ in \eqref{l_3}, the intersection of $l_1$ in \eqref{l_1} and $l_3$ in \eqref{l_3}, and intersection of $l_1$ in \eqref{l_1} and $l_2$ in \eqref{l_2}, respectively. Let $J=\{1,2,3\}$, and $F_1=\{1,2\},F_2=\{1,3\},F_3=\{2,3\}$, then 
\begin{align}
\begin{aligned}
    \pi_{\texttt{oi}}(\mathbf{x})=&\max_{j\in J}\min_{i\in F_j}g_i(\mathbf{x}),\\
    =&\max\{\min\{g_1(\mathbf{x}),g_2(\mathbf{x})\},\ \min\{g_1(\mathbf{x}), g_3(\mathbf{x})\},\\
    &\min\{g_2(\mathbf{x}), g_3(\mathbf{x})\}\}.
\end{aligned}
\label{eq:pi}
\end{align}
Therefore the close form expression of outcome surface is 
\begin{align*}
    \pi_{\texttt{oi}}(\mathbf{x})=&\max\{\min\{g_1(\mathbf{x}),g_2(\mathbf{x})\},\ \min\{g_1(\mathbf{x}), g_3(\mathbf{x})\},\\
    &\min\{g_2(\mathbf{x}), g_3(\mathbf{x})\}\},\\=&\ 0.
\end{align*}
\figref{fig:illus_linearpiecewise} illustrates $g_i(\mathbf{x}),\mathcal{H},Q$ and $\pi_{\texttt{oi}}(\mathbf{x})$. 

\begin{figure}[!tbp]
\centering
    \subfloat[]{%
    \label{fig: Q}
    \includegraphics[width=0.8\columnwidth, height=1.5in]{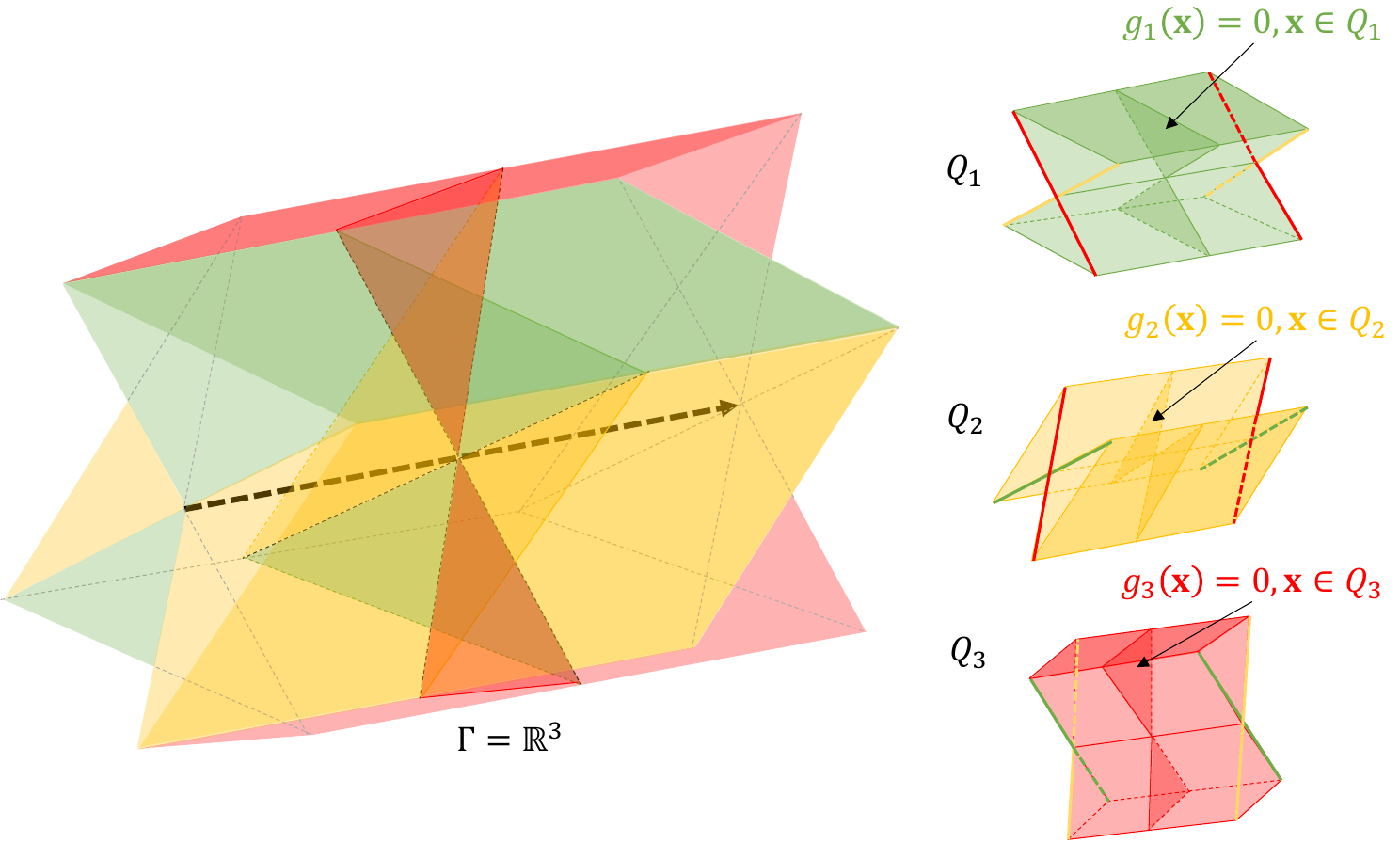}}%
    
    \subfloat[]{%
    \label{fig: PI}
    \includegraphics[width=0.5\columnwidth]{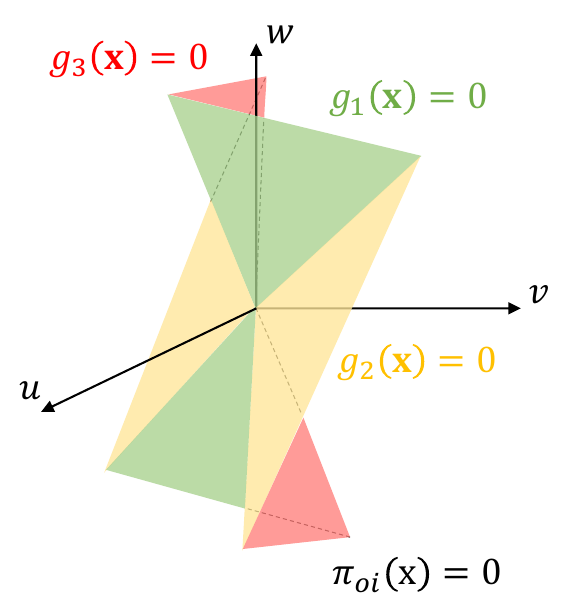}}%
    \caption{The top panel (a) illustrates the composition of the function's definition space \( \Gamma \), which is partitioned into three subspaces \( Q_1 \), \( Q_2 \), \( Q_3 \). In each subspace, the function \( \pi_{\texttt{oi}}(\mathbf{x}) \) is linear, that is, \( \pi_{\texttt{oi}}(\mathbf{x}) = g_i(\mathbf{x}) \) for \( \mathbf{x} \in Q_i \) ($Q_i$ extending indefinitely outward into space).  The plot on the right of (a) shows the positions where \( g_i(\mathbf{x}) = 0 \). 
    (b) displays the relative positioning of the surface \( \pi_{\texttt{oi}}(\mathbf{x}) = 0 \) in the Cartesian coordinate system. The red half-plane is $g_1$ in \eqref{g_1}, the green half-plane is $g_2$ in \eqref{g_2}, and the yellow half-plane is $g_3$ in \eqref{g_3}. These three half-planes together form the outcome interface, i.e., \(\pi_{\texttt{oi}}(\mathbf{x}) = 0 \). 
}   
    \label{fig:illus_linearpiecewise}
\end{figure}

\subsubsection{Construction of Utility Function}
\label{sssec:u_for_heter}
In this section, we first prove that the outcome surface is the demarcation surface between win and loss. Then we construct the utility function based on the outcome interface. Suppose Player 1 allocates $(u_1R_1, v_1R_2, w_1R_3)$ and Player 2 allocates $(u_2R_1, v_2R_2, w_2R_3)$ on a node, the robot distribution after subtraction is $\delta u = u_1-u_2$, $\delta v = v_1-v_2$, and $\delta w = w_1-w_2$.

\noindent \textbf{Winning condition.} Notably, Player 1 wins on a given node if and only if the remaining robot distribution after subtraction on the node $(\delta u, \delta v, \delta w)$ can be elimination transformed into $(\delta u^*, \delta v^*, \delta w^*)$, with $\delta u^*, \delta v^*, \delta w^* \geq 0$ and not all of them equal to $0$. Player 2 wins on a given node if and only if the remaining robot distribution after subtraction on the node $(\delta u, \delta v, \delta w)$ can be eliminated transformed into $(\delta u^*, \delta v^*, \delta w^*)$, with $\delta u^*, \delta v^*, \delta w^* \leq 0$ and not all of them equal to $0$. The game reaches a draw on a given node if and only if the remaining robot distribution after subtraction on the node $(\delta u, \delta v, \delta w)$ can be eliminated transformed into $(\delta u^*, \delta v^*, \delta w^*)$, with $\delta u^*, \delta v^*, \delta w^* = 0$.

Next, we show the completeness of the winning condition, i.e., 
\begin{theorem}
\itshape
    $\forall \mathbf{x} \in \mathbb{R}^3$~\footnote{In the appendix, we give an example demonstrating that when the dimension of $\mathbf{x}$ is larger than 3, $\mathbf{x}$ no longer belongs to one of these three conditions only.}, $\mathbf{x}$ belongs and only belongs to one of the three conditions. 
    \label{completeness}
\end{theorem}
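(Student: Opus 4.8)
\textit{(Proof plan.)} The plan is to recast the three conditions as a reachability question for a short list of ``elimination moves'' on $\mathbb{R}^{3}$, and then, region by region, to exhibit one of $g_{1},g_{2},g_{3}$ that is conserved by every applicable move; the sign of that functional will then already decide which of the three conditions $\mathbf{x}$ can satisfy.

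First I would make the elimination transformation explicit. Writing $\mathbf{x}=(\delta u,\delta v,\delta w)$ for the residual distribution on a node, an \emph{atomic move} chooses an ordered pair $(i,j)$ with $\mathbf{I}_{ij}>1$, i.e.\ $(i,j)\in\{(1,2),(2,3),(3,1)\}$, for which $\delta x_{i}$ and $\delta x_{j}$ have opposite signs; the player holding the surplus of the dominating type $i$ spends $t\ge 0$ of it to annihilate $\mathbf{I}_{ij}\,t$ of the opponent's type $j$, pushing both coordinates toward $0$ under the no-overshoot bounds $t\le|\delta x_{i}|$ and $\mathbf{I}_{ij}\,t\le|\delta x_{j}|$. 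A state from which no move is available is exactly one whose coordinates are all $\ge 0$, all $\le 0$, or all $0$; I call it \emph{terminal}. In this language, the Player-1 winning condition (resp.\ the Player-2 winning condition, resp.\ the draw condition) for $\mathbf{x}$ asserts precisely that the set reachable from $\mathbf{x}$ contains a terminal state that is all $\ge 0$ and not all $0$ (resp.\ all $\le 0$ and not all $0$; resp.\ equal to $\mathbf{0}$). Since these three target sets are pairwise disjoint and a non-terminal state meets none of the three conditions on its own, it suffices to show that the reachable set of every $\mathbf{x}$ meets exactly one of them.

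The computational core is a one-line check per move type: plugging the update into \eqref{eq:g} and using $R_{i}=\mathbf{I}_{ij}R_{j}$, the $(1,2)$-move fixes $g_{2}$ and $g_{3}$, the $(2,3)$-move fixes $g_{1}$ and $g_{3}$, and the $(3,1)$-move fixes $g_{1}$ and $g_{2}$ (the remaining $g_{i}$ moves monotonically in the mover's favour, using $\mathbf{I}_{12}\mathbf{I}_{23}\mathbf{I}_{31}>1$, though that fact is not actually needed here). I would then pair this with an eligibility count. For instance, in the octant $\delta w<0<\delta u,\delta v$ only the $(2,3)$- and $(3,1)$-moves are available (the pair $(1,2)$ has equal signs) and both fix $g_{1}$; the no-overshoot bounds keep every move inside the closed octant $\{\delta u\ge 0,\delta v\ge 0,\delta w\le 0\}$, where those two remain the only available move types, so $g_{1}$ is constant on the whole reachable set of $\mathbf{x}$. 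The five remaining mixed sign patterns follow by the cyclic relabelling $1\to 2\to 3$ and the player swap $\mathbf{x}\mapsto-\mathbf{x}$ (under which $g_{i}\mapsto-g_{i}$). On a coordinate plane a pattern has a zero coordinate and two of the $g_{i}$ are conserved at once, but on each such plane those two $g_{i}$ turn out to be positive scalar multiples of one another (immediate from \eqref{eq:g} and $\mathbf{I}_{ij}>0$), so their signs agree and no conflict arises.

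With a conserved $g_{k}$ on the reachable set, the remainder is bookkeeping. For termination, from a mixed $\mathbf{x}$ a \emph{maximal} move (largest admissible $t$) zeros one of its two active coordinates, so a no-zero sign pattern reaches one with a zero after one maximal move and a terminal state after at most one more; hence the reachable set always contains a terminal state. For classification, since every $\mathbf{I}_{ij}>0$ the linear form $g_{k}$ is strictly positive on ``all $\ge 0$, not all $0$'', strictly negative on ``all $\le 0$, not all $0$'', and zero only at $\mathbf{0}$; therefore the terminal state reached from $\mathbf{x}$ falls into the first, second, or third target set according as $g_{k}(\mathbf{x})>0$, $<0$, or $=0$, and since $g_{k}$ is constant along every path, no other reachable terminal state can fall into a different set. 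This yields exactly one of the three conditions for every mixed $\mathbf{x}$; and if $\mathbf{x}$ already has all coordinates $\ge 0$, all $\le 0$, or equal to $\mathbf{0}$ there are no moves at all, so the matching single condition holds trivially. The step I expect to be the main obstacle is the invariance-plus-containment argument of the third paragraph --- proving uniformly over each region that a single $g_{k}$ is conserved by \emph{every} applicable move and that the reachable set cannot leak out of its region; this is exactly where the asymmetry of elimination (annihilate, never create), the no-overshoot constraints, and the zero-coordinate sign-agreement all have to be used, whereas everything else reduces to a substitution into \eqref{eq:g} and an elementary move count. (As a side remark, $\pi_{\texttt{oi}}$ in \eqref{eq:pi} is the pointwise median of $g_{1},g_{2},g_{3}$ and the conserved $g_{k}$ coincides with that median on its region, which is what links this statement to the utility-function construction that follows.)
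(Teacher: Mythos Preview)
Your proposal is correct and takes a genuinely different route from the paper. The paper's argument for Theorem~\ref{completeness} is essentially procedural: it fixes a single deterministic elimination order (scan pairs $(1,2)$, $(1,3)$, $(2,3)$ in sequence, always reduce the first mixed-sign pair to a zero), observes that this terminates in at most two steps, and concludes that the outcome is unique because the procedure is deterministic. This leaves implicit the question of whether a \emph{different} elimination order could reach a terminal state of a different sign class, which is precisely what the theorem asserts cannot happen.

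Your invariant-based argument closes that gap directly. By checking that each atomic $(i,j)$-move preserves two of the linear forms $g_1,g_2,g_3$, and that in any fixed mixed-sign octant the only admissible move types share a common conserved $g_k$, you obtain a quantity whose sign is constant along \emph{every} elimination trajectory, not just the paper's canonical one. Since the coefficients of each $g_k$ are positive, that sign already determines which of the three terminal classes is reached. Your containment observation (no-overshoot keeps the trajectory inside the closed octant, so the set of admissible moves cannot enlarge) is exactly what makes the invariant global rather than local, and your boundary remark (on a coordinate plane the two conserved $g_i$ are positive scalar multiples of one another) cleanly handles the interface between octants. As you note at the end, the conserved $g_k$ coincides with the median $\pi_{\texttt{oi}}$ on its region, so your argument essentially proves Theorem~\ref{completeness} and the subsequent $\pi_{\texttt{oi}}$-classification theorem in one stroke; the paper separates these and spends its effort on the latter via a case analysis over $Q_1,Q_2,Q_3$. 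What the paper's approach buys is concreteness (an explicit two-step reduction one can compute by hand); what yours buys is order-independence and a transparent link between the winning condition and the utility construction.
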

Theorem \ref{completeness} tells that if $\mathbf{x}$ can be elimination transformed into $\mathbf{x}'$ with all entries of $\mathbf{x}'\geq 0$ and $\mathbf{x}'\neq 0$, then there is no other different elimination transformation that can transform it into $\mathbf{x}''$ that all entries of $\mathbf{x}''\leq 0$. This is obvious since the elimination transformation is a deterministic process in 3D. That is, for a vector $\mathbf{v}\in\mathbb{R}^3$,
\begin{itemize}
    \item if all entries of $\mathbf{v}$ are concordant, then by definition $\mathbf{v}$ can not be eliminated transformed, or $\mathbf{v}$ can only be eliminated transformed into itself.
    \item if not all entries of $\mathbf{v}$ are concordant, we sequentially search through pairs of entries in $\mathbf{v}$, specifically the combinations $(1, 2)$, $(1, 3)$, and $(2, 3)$. We select the first pair with different signs to operate an elimination transformation and continue this process until the value of one element in a pair reaches zero. This is a deterministic process and if the newly obtained vector $\mathbf{v}'$ can still undergo an elimination transformation, meaning not all its entries have the same sign. Then we continue this process until we achieve a vector where all entries are concordant. In 3D, this process needs at most two iterations to ensure that the resulting vector has entries of the same sign. Again, each step of this process is deterministic.
\end{itemize}
Theorem \ref{completeness} ensures that the defined winning condition divides space $\mathbb{R}^3$ into three mutually exclusive subspaces. Indeed, the surface $\pi_{\texttt{oi}}(\mathbf{x})=0$ is one of the three subspaces, more precisely, the \emph{tie game} subspace. $\pi_{\texttt{oi}}(\mathbf{x})>0$ corresponding to the Player 1 winning space and $\pi_{\texttt{oi}}(\mathbf{x})<0$ corresponding to the Player 2 winning space. That is,

\begin{theorem}
    At a node, Player 1 wins if and only if $\pi_{\texttt{oi}}(\delta u, \delta v, \delta w)>0$, Player 2 wins if and only if $\pi_{\texttt{oi}}(\delta u, \delta v, \delta w)<0$, and the game is tied if and only if $\pi_{\texttt{oi}}(\delta u, \delta v, \delta w)=0$. 
\end{theorem}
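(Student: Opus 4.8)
The plan is to prove, for every $\mathbf{x}=(\delta u,\delta v,\delta w)\in\mathbb{R}^3$, that the winning condition satisfied by $\mathbf{x}$ is exactly the one matching $\operatorname{sgn}\pi_{\texttt{oi}}(\mathbf{x})$; since Theorem~\ref{completeness} says the three winning conditions partition $\mathbb{R}^3$, and $\{\pi_{\texttt{oi}}>0\}$, $\{\pi_{\texttt{oi}}=0\}$, $\{\pi_{\texttt{oi}}<0\}$ also partition $\mathbb{R}^3$, the three ``if and only if'' statements then follow at once. I would first record two symmetries that cut down the work. By \eqref{eq:pi}, $\pi_{\texttt{oi}}$ is the maximum of the three pairwise minima of the homogeneous linear forms $g_1,g_2,g_3$ of \eqref{g_1}--\eqref{g_3}, that is, their median (middle value); it is therefore odd in $\mathbf{x}$, and since exchanging the two players replaces $\mathbf{x}$ by $-\mathbf{x}$ and swaps ``Player~1 wins'' with ``Player~2 wins'' (the elimination transformation being odd), the Player~2 case reduces to the Player~1 case. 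Moreover the whole construction is invariant under the cyclic relabeling $R_1\to R_2\to R_3\to R_1$, which cycles $g_1\to g_2\to g_3$. Up to these symmetries it suffices to treat $\mathbf{x}$ with sign pattern $(+,+,+)$, with pattern $(+,-,+)$, or with a zero coordinate.

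The pattern $(+,+,+)$ is immediate: no elimination step applies, Player~1 wins outright, and $g_1(\mathbf{x}),g_2(\mathbf{x}),g_3(\mathbf{x})>0$ because all coefficients in \eqref{g_1}--\eqref{g_3} are positive, so $\pi_{\texttt{oi}}(\mathbf{x})>0$; the patterns with a zero coordinate are handled by, or are degenerate instances of, the computation below. For the remaining pattern the structural fact I would use is that one elimination step on the dominating pair $(R_i,R_{i+1})$ (indices mod~$3$) moves $\mathbf{x}$ along the line $l_i$ of \eqref{eq:lines}, and $l_i$ lies inside both planes $\{g_{i+1}=0\}$ and $\{g_{i+2}=0\}$ but not inside $\{g_i=0\}$ --- the last because $\rho:=\mathbf{I}_{12}\mathbf{I}_{23}\mathbf{I}_{31}>1$. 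Hence such a step leaves $g_{i+1}$ and $g_{i+2}$ unchanged and moves only $g_i$.

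Next I would run, on $\mathbf{x}$ with pattern $(+,-,+)$, the deterministic reduction guaranteed by Theorem~\ref{completeness}. Coordinates $1$ and $2$ have opposite signs, so the first step eliminates on $(R_1,R_2)$; set $h_1=\delta u+\delta v/\mathbf{I}_{12}$. If $h_1\ge 0$, Player~1's $R_1$ annihilates all of Player~2's $R_2$, the step ends at $(h_1,0,\delta w)$ --- a nonnegative, nonzero vector --- so Player~1 wins; and since $g_2(\mathbf{x})=\mathbf{I}_{12}h_1+\mathbf{I}_{12}\mathbf{I}_{31}\delta w>0$ and $g_3(\mathbf{x})=\mathbf{I}_{12}\mathbf{I}_{23}h_1+\delta w>0$ (using $\delta w>0$), we get $\pi_{\texttt{oi}}(\mathbf{x})>0$. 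If $h_1<0$, the step reaches $(0,\mathbf{I}_{12}h_1,\delta w)$ with negative second entry, and the last step, eliminating on $(R_2,R_3)$, terminates --- because $g_3$ was conserved and equals $\mathbf{I}_{23}(\mathbf{I}_{12}\delta u+\delta v)+\delta w$ --- at $(0,0,g_3(\mathbf{x}))$ when $g_3(\mathbf{x})\ge 0$ and at $(0,g_3(\mathbf{x})/\mathbf{I}_{23},0)$ when $g_3(\mathbf{x})\le 0$; in all subcases the sign of the outcome is $\operatorname{sgn}g_3(\mathbf{x})$.

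It then remains to show $\operatorname{sgn}\pi_{\texttt{oi}}(\mathbf{x})=\operatorname{sgn}g_3(\mathbf{x})$ whenever $\delta u>0$, $\delta v<0$, $\delta w>0$ and $h_1<0$, and here the argument funnels into two short scalar estimates, each a one-line substitution using $\rho>1$: (I)~if $g_1(\mathbf{x})\ge 0$ then $g_3(\mathbf{x})\ge(\rho-1)\,\delta u/\mathbf{I}_{31}>0$, obtained by feeding the lower bound on $\delta w$ implied by $g_1(\mathbf{x})\ge 0$ into $g_3$; and (II)~if $g_3(\mathbf{x})\ge 0$ then $g_2(\mathbf{x})\ge(\rho-1)\,|\mathbf{I}_{12}h_1|>0$, obtained the same way from $g_3(\mathbf{x})\ge 0$ into $g_2$. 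Given (I)--(II): if $g_3(\mathbf{x})>0$ then also $g_2(\mathbf{x})>0$, so at least two of $g_1,g_2,g_3$ are positive and $\pi_{\texttt{oi}}(\mathbf{x})>0$; if $g_3(\mathbf{x})<0$ then the contrapositive of (I) gives $g_1(\mathbf{x})<0$, so at least two are negative and $\pi_{\texttt{oi}}(\mathbf{x})<0$; if $g_3(\mathbf{x})=0$ then $g_1(\mathbf{x})<0$ and $g_2(\mathbf{x})>0$, so the median is $0$. In every case $\operatorname{sgn}\pi_{\texttt{oi}}(\mathbf{x})$ equals the sign of the outcome, which completes the proof. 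I expect the main obstacle to be exactly this last matching --- recognizing that in the hard regime the multi-step elimination collapses to ``outcome $=\operatorname{sgn}g_3$'' and then pinning down the two inequalities (I)--(II); a more mechanical route, a direct case split over the eight sign regions of $(g_1,g_2,g_3)$ with the reduction run in each, also works but is considerably longer.
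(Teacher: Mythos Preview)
Your proof is correct and shares the paper's overall strategy---reduce by symmetry to a single sign pattern, run the elimination transformation explicitly, and compare the outcome to the linear forms $g_i$---but the organization differs enough to note. The paper works with the pattern $(\delta u,\delta v,\delta w)=( {+},{+},{-})$ and case-splits by which region $Q_i$ contains $\mathbf{x}$ (equivalently, which $g_i$ is the median), proving sufficiency and necessity as separate passes; in each case it runs the elimination and verifies the needed inequalities directly. You instead work with the pattern $({+},{-},{+})$ and make the structural observation that an elimination step along $l_i$ conserves $g_{i+1}$ and $g_{i+2}$; this lets you read off the terminal state as $(0,0,g_3(\mathbf{x}))$ or $(0,g_3(\mathbf{x})/\mathbf{I}_{23},0)$ without tracking cases, so the outcome sign is simply $\operatorname{sgn}g_3(\mathbf{x})$, and the two scalar estimates (I)--(II) then match this with $\operatorname{sgn}\pi_{\texttt{oi}}(\mathbf{x})$ for both directions at once. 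The conserved-quantity viewpoint is a genuinely cleaner device that the paper does not isolate, and treating sufficiency, necessity, and the tie case simultaneously via $\operatorname{sgn}$ avoids the paper's duplicated casework. Conversely, the paper's $Q_i$-based split makes the role of the median more transparent and stays closer to the geometric picture of Figure~\ref{fig:illus_linearpiecewise}.
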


Without loss of generality, we prove the winning condition for Player 1. The winning condition for player 2 and the condition for a draw can be proved similarly. 
\begin{proof}
    First, we prove the sufficiency, i.e., if $\pi_{\texttt{oi}}(\delta u, \delta v, \delta w)>0$, then Player 1 wins. According to the winning condition, the winning of Player 1 is that $(\delta u,\delta v, \delta w)$ can be elimination transformed into $(\delta u^*,\delta v^*, \delta w^*)$, where $\delta u^*,\delta v^*, \delta w^* \geq 0$ and not all of them are $0$.

     If $\delta u,\delta v, \delta w \geq 0$ and not all of them are $0$, then $(\delta u^*,\delta v^*, \delta w^*)$ is exactly $(\delta u,\delta v, \delta w)$.
     If not all $\delta u,\delta v, \delta w \geq 0$, without loss of generality, we assume $\delta w<0$ and $\delta u, \delta v\geq 0$ and consider three cases below. 
     \begin{itemize}
         \item \textit{Case 1:} $\mathbf{x}=(\delta u, \delta v, \delta w)\in \mathbf{Q}_1$, i.e., $\pi_{\texttt{oi}}(\mathbf{x})=g_1(\mathbf{x})>0$. In this case, we first transform $\delta u R_1$ into $R_3$ until one of them becomes $0$. If $R_3$ is completely eliminated by $R_1$, or, 
         \begin{align}
             \frac{\delta u}{\mathbf{I}_{31}}+ \delta w\geq 0.
             \label{proof_0}
         \end{align}
         then through elimination transformation $(\delta u, \delta v, \delta w)$ is transformed into $(\delta u+\mathbf{I}_{31}\delta w,\delta v,0)$ with each entry no less than $0$. If $\frac{\delta u}{\mathbf{I}_{31}} + \delta w<0$, it means $\delta uR_1$ does not suffice to eliminate $R_3$, and thus we further transform $\delta v R_2$ into $R_3$. If
         \begin{align}
             \frac{\delta u}{\mathbf{I}_{31}}+\delta w + \mathbf{I}_{23}\delta v > 0.
             \label{proof_1}
         \end{align} 
         then $\delta v R_2$ suffices to eliminate remaining $R_3$, and through elimination transformation $(\delta u, \delta v, \delta w)$ is transformed into $(0, \delta v + \frac{\delta u}{\mathbf{I}_{31}\mathbf{I}_{23}}+\frac{\delta w}{\mathbf{I}_{23}}, 0)$ with each entry no less than $0$. Indeed, \eqref{proof_1} is valid since 
         \begin{align*}
             \begin{aligned}
                 \eqref{proof_1}&=\frac{1}{\mathbf{I}_{31}}(\delta u + \mathbf{I}_{31}\mathbf{I}_{23}\delta v + \mathbf{I}_{31}\delta w)\\
                 &=\frac{1}{\mathbf{I}_{31}}g_1(\mathbf{x})>0.
             \end{aligned}
         \end{align*} 
         Thus in this case $(\delta u, \delta v, \delta w)$ can always be elimination transformation into $(\delta u^*, \delta v^*, \delta w^*)$ with entries no less than $0$ and not all of them are $0$.
         
         \item \textit{Case 2:} $\mathbf{x}=(\delta u, \delta v, \delta w)\in \mathbf{Q}_3$, i.e., $\pi_{\texttt{oi}}(\mathbf{x})=g_3(\mathbf{x})>0$. We can still run the transformation process the same as for case 1, but need extra steps to prove \eqref{proof_1}. Note that \eqref{eq:pi} reveals an implication of $\pi_{\texttt{oi}}(\mathbf{x})$, in face $\pi_{\texttt{oi}}(\mathbf{x})$ represents the median of $g_1(\mathbf{x}),g_2(\mathbf{x})$ and $g_3(\mathbf{x})$ given $\mathbf{x}$. Therefore, with $\mathbf{x}\in\mathbf{Q}_3$, we obtain
         \begin{align}
             \min\{g_1(\mathbf{x}),g_2(\mathbf{x})\}\leq g_3(\mathbf{x}) \leq \max\{g_1(\mathbf{x}),g_2(\mathbf{x})\}.
             \label{proof_2}
         \end{align}
         Given that $\delta w<0$ and $\delta u, \delta v\geq 0$, it is clear that $g_3(\mathbf{x})\geq g_2(\mathbf{x})$. Thus \eqref{proof_2} becomes $g_2(\mathbf{x})\leq g_3(\mathbf{x})\leq g_1(\mathbf{x})$. Therefore in this case $g_1(\mathbf{x})\geq g_3(\mathbf{x})=\pi_{\texttt{oi}}(\mathbf{x})\geq 0$, which shows that \eqref{proof_1} is valid.
        
         \item \textit{Case 3:} $\mathbf{x}=(\delta u, \delta v, \delta w)\in \mathbf{Q}_2$, i.e., $\pi_{\texttt{oi}}(\mathbf{x})=g_2(\mathbf{x})>0$. Based on the proofs of case 1 and case 2, we know that in this case $g_2(\mathbf{x})$ is the median of $g_1(\mathbf{x}),g_2(\mathbf{x}), g_3(\mathbf{x})$. $g_3(\mathbf{x})\geq g_2(\mathbf{x})$ is valid as long as $\delta w<0, \delta u, \delta v\geq 0$ holds. Thus we have $g_1(\mathbf{x})\leq g_2(\mathbf{x})\leq g_3(\mathbf{x})$.  With $g_2(\mathbf{x})\geq g_1(\mathbf{x})$, \eqref{proof_0} always holds, which means $\delta u R_1$ always suffices to eliminate $\delta w R_3$. 
         Indeed, $g_2(\mathbf{x})\geq g_1(\mathbf{x})$ can be rewritten as
         \begin{align}
            \mathbf{I}_{31}(\mathbf{I}_{12}-1)(\frac{\delta u}{\mathbf{I}_{31}}+\delta w) \geq (\mathbf{I}_{23}\mathbf{I}_{31}-1)\delta v.
            \label{proof_4}
         \end{align}
         Given $(\mathbf{I}_{23}\mathbf{I}_{31}-1)\delta v\geq 0$, \eqref{proof_4} leads to \eqref{proof_0}. This means $(\delta u,\delta v, \delta w)$ can always be eliminated into $(\delta u+\mathbf{I}_{31}\delta w,\delta v,0)$.
     \end{itemize}
    Therefore, the sufficiency is proven.
    
    Then we prove the necessity, i.e., if Player 1 wins (or equivalently, $(\delta u,\delta v, \delta w)$ can be elimination transformed into $(\delta u^*,\delta v^*, \delta w^*)$ with $\delta u^*,\delta v^*, \delta w^* \geq 0$ and not all of them being $0$), then $\pi_{\texttt{oi}}(\delta u, \delta v, \delta w)>0$. We consider two cases. 

    \begin{itemize}
        \item \textit{Case 1:} $\delta u,\delta v, \delta w \geq 0$ and not all of them are $0$, then $(\delta u^*,\delta v^*, \delta w^*)$ is exactly $(\delta u,\delta v, \delta w)$.  Obviously, $\pi_{\texttt{oi}}(\delta u, \delta v, \delta w)>0$, since the coefficients of piecewise linear function $\pi_{\texttt{oi}}(\cdot)$ are positive.
        \item \textit{Case 2:} Not all $\delta u,\delta v, \delta w \geq 0$, assuming $\delta w < 0$. Upon performing a elimination transformation on $(\delta u,\delta v, \delta w)$, either \eqref{proof_0} or \eqref{proof_1} must hold since the result invariably ensures that all entries are non-negative. If \eqref{proof_0} holds, then $g_1(\mathbf{x}),g_2(\mathbf{x}),g_3(\mathbf{x}) > 0$, leading to $\pi_{\texttt{oi}}(\mathbf{x})>0$. If \eqref{proof_0} does not hold, then \eqref{proof_1} must hold and $\pi_{\texttt{oi}}(\mathbf{x})\neq g_2(\mathbf{x})$ since we have shown that if $\pi_{\texttt{oi}}(\mathbf{x})= g_2(\mathbf{x})$, then \eqref{proof_0} must hold in the proof of necessity. Therefore, either $\pi_{\texttt{oi}}(\mathbf{x})=g_1(\mathbf{x})$ or $\pi_{\texttt{oi}}(\mathbf{x})=g_3(\mathbf{x})$. If $\pi_{\texttt{oi}}(\mathbf{x})=g_1(\mathbf{x})$ then \eqref{proof_1} directly leads to $\pi_{\texttt{oi}}(\mathbf{x})>0$. If $\pi_{\texttt{oi}}(\mathbf{x})=g_3(\mathbf{x})$, from $\frac{\delta u}{\mathbf{I}_{31}}+\delta w <0$ and \eqref{proof_1}, we have 
        \begin{align}
            \delta v \geq -(\frac{\delta u}{\mathbf{I}_{31}\mathbf{I}_{23}}+\frac{\delta w}{\mathbf{I}_{23}}).
            \label{proof_5}
        \end{align}
        Substituting \eqref{proof_5} into $g_3(\mathbf{x})$, we have
        \begin{align*}
            \begin{aligned}
                g_3(\mathbf{x})&=\mathbf{I}_{12}\mathbf{I}_{23}\delta u + \mathbf{I}_{23}\delta v + \delta w\\
                &\geq (\mathbf{I}_{12}\mathbf{I}_{23}-1)\delta u\\
                &>0.
            \end{aligned}
        \end{align*}
        Then we have $\pi_{\texttt{oi}}(\mathbf{x})>0$. 
    \end{itemize}  
    Therefore the necessity is also proved.
\end{proof}
    
We use an example to further clarify the property of the outcome interface. Suppose the remaining robot distribution after subtraction on a node is $(\delta u, \delta v, \delta w)=(4,2,-7)$. We evaluate this outcome from two views. First, intuitively, Player 2 eliminates all Player 1's $2R_2$ by $-4R_3$ since $R_2=2R_3$, and eliminates all Player 1's $4R_1$ by $-2R_3$ since $R_3=2R_1$, and in the end, Player 2 still has $1R_3$ while Player 1 remains nothing. Thus Player 2 wins on this node. In this process, we first convert $R_2$ into $R_3$, and then convert $R_1$ into $R_3$, and finally find out that after the transformation there is still remaining $R_3$ on Player 2's side. The second view is through the outcome surface $\pi_{\texttt{oi}}(\mathbf{x})$. Plugging $\mathbf{x}=(\delta u, \delta v, \delta w)=(4,2,-7)$ into \eqref{eq:pi}, we have $\pi_{\texttt{oi}}((4,2,-7))=-2<0$, indicating Player 2 wins. Indeed the calculation of $\pi_{\texttt{oi}}((4,2,-7))$ includes three steps. We first compute $g_1((4,2,-7))$, i.e., calculating the outcome of converting all types of robots into the $R_1$, i.e., is $-18 R_1$. Second, we compute $g_2((4,2,-7))$, i.e., calculating the outcome of converting all types of robots into the $R_2$, which is $13R_2$. Thirdly, we compute $g_3((4,2,-7))$, i.e., calculating the outcome of converting all types of robots into the $R_3$, which is $-2R_3$. Finally, we choose the middle value to represent the outcome according to the definition of $\pi_{\texttt{oi}}$. This is consistent with the intuition. The reason that the value from the outcome surface $-2$ is different from that of the intuition $-1$ is that the plane function $g_1(\cdot)$, component of $\pi_{\texttt{oi}}(\cdot)$, is $\mathbf{I}_{13}$ multiplying the converted term $(\delta u/\mathbf{I}_{31}+\mathbf{I}_{23}\delta v+\delta w)$.

The utility function $u_{\texttt{CDH}}(\cdot)$ is then computed based on the outcome interface. To ensure function continuity, we assume that Player 1 is only considered to completely win when \( \pi_{\texttt{oi}}(\mathbf{x}) > C \) and the utility for Player 1 is 1. Otherwise, if \( 0 < \pi_{\texttt{oi}}(\mathbf{x}) < C \), Player 1 has an incomplete victory, with their utility being a number that is between 0 and 1. The same applies to Player 2. Note that the direction of the surface's movement from $\pi_{\texttt{oi}}(\mathbf{x}) = 0$ to $ \pi_{\texttt{oi}}(\mathbf{x}) = \epsilon $ is denoted by the black dashed lines in \figref{fig:illus_linearpiecewise} ($\epsilon$ is a constant).
The illustration of the constructed utility function is shown in \figref{fig:b}.

\begin{figure}
    \centering
    \includegraphics[width=3in]{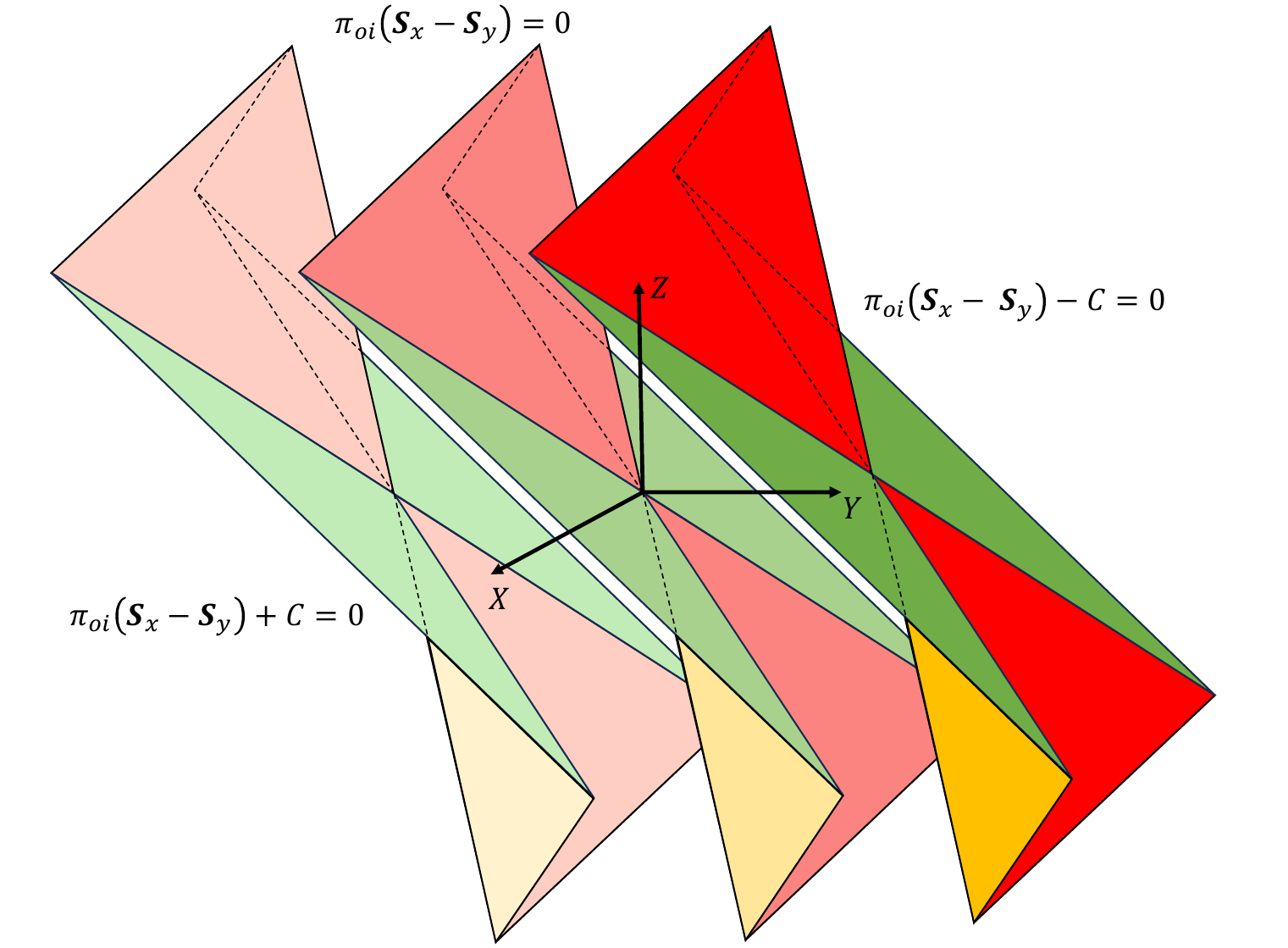}
    \caption{Illustration of utility function $u_{\texttt{CDH}}$. The surface $\pi_{\texttt{oi}}(\mathbf{S}_x-\mathbf{S}_y)$ in the middle is the outcome interface in \figref{fig: PI}.}
    \label{fig:b}
\end{figure}
\begin{align}
    \label{eq:heter-utility}
     u_{\texttt{CDH}}(\mathbf{S}_x, \mathbf{S}_y) = \sum_{i=1}^{N}{\pi(\mathbf{S}_{x,i} - \mathbf{S}_{y,i})}.
\end{align}
\begin{align*}
\text{where} \quad \pi(\mathbf{x}) = \left\{
        \begin{aligned}
        -1&, &\pi_{\texttt{oi}}(\mathbf{x})&\leq-C;\\
        \pi_{\texttt{oi}}(\mathbf{x})/C&, &\pi_{\texttt{oi}}(\mathbf{x})&\in [-C, C];\\
        1&, &\pi_{\texttt{oi}}(\mathbf{x})&\geq C.
        \end{aligned}
        \right.
\end{align*}
$C$ is the threshold for win or loss. The remaining robot distribution after subtraction on $i$-th node $\mathbf{S}_{x,i} - \mathbf{S}_{y,i}=(\delta u, \delta v, \delta w)$ is situated on the surface $\pi_{\texttt{oi}}$. If $\pi_{\texttt{oi}}>C$, then $u_{\texttt{CDH}}(\mathbf{S}_{x,i} -\mathbf{S}_{y,i})=1$, indicating an absolute win for Player 1. Conversely, if $\pi_{\texttt{oi}}<-C$, then $u_{\texttt{CDH}}(\mathbf{S}_{x,i} - \mathbf{S}_{y,i})=-1$, indicating an absolute win for Player 2. If $-C < \pi_{\texttt{oi}} < C$, then $u_{\texttt{CDH}}(\mathbf{S}_{x,i} - \mathbf{S}_{y,i})=\pi_{\texttt{oi}}/C$, meaning that one side achieves winning to a certain extent. This way of constructing the utility function ensures its continuity~\cite{adam2021double}, consequently facilitating the calculation of the best response strategy (\algref{alg:doa}, \linref{alg1.2}).

\subsection{DOA with CDH robots}
\label{ssec:doa_cdh}
We next introduce the steps of solving the optimization problem of \eqref{eq:bst_st} in \algref{alg:doa}, \linref{alg1.2} with the constructed utility function $u_{\texttt{CDH}}(\cdot)$. The main idea is to rewrite the piecewise linear function into linear inequalities and transform the problem into a linear programming problem. 

In \eqref{eq:pi}, $\pi_{\texttt{oi}}(\mathbf{x})$ has the Max-Min representation. We further rewrite it by linear inequalities. First, we rewrite it as
\begin{align}
\begin{aligned}
    \pi_{\texttt{oi}}(\mathbf{x}) = &\ g_1(\mathbf{x}) + g_2(\mathbf{x}) + g_3(\mathbf{x})-\min\{g_1(\mathbf{x}), g_2(\mathbf{x}), g_3(\mathbf{x})\}\\
    &- \max\{g_1(\mathbf{x}), g_2(\mathbf{x}), g_3(\mathbf{x})\}\\
    = &\ g_1(\mathbf{x}) + \max\{g_2(\mathbf{x})-g_1(\mathbf{x}), g_2(\mathbf{x})-g_3(\mathbf{x}),0\}\\
    &- \max\{g_1(\mathbf{x})-g_3(\mathbf{x}), g_2(\mathbf{x})-g_3(\mathbf{x}),0\}.
\end{aligned}
\label{eq:pi_updated}
\end{align}
Note that \eqref{eq:pi_updated} expresses $\pi_{\texttt{oi}}(\mathbf{x})$ in the form of linear combination of $\max\{f_1(\mathbf{x}),f_2(\mathbf{x}),0\}$. To rewrite it into linear inequalities, \lemmaref{corollary:ieq_new} is applied. To begin with, we first introduce \cite[Lemma C.1]{adam2021double} to write the functions in the form of $F(\mathbf{x})=\max\{f(\mathbf{x}),0\}$ into inequalities form.
\begin{lemma}[\cite{adam2021double}]
\itshape
    Let $f(\mathbf{x})$ be a function of $\mathbb{R}^d \rightarrow \mathbb{R}$, $U>0$, and $F(\mathbf{x})=\max\{f(\mathbf{x}),0\}$. For every $\mathbf{x}$ such that $f(\mathbf{x}) \in [-U,U]$ there is a unique $s\in \mathbb{R}$ and a possible non-unique $z\in\{0,1\}$ solving the system
    \begin{align*}
        \begin{aligned}
        s&\geq 0,             &s&\leq Uz,\\
        s&\geq f(\mathbf{x}), &s&\leq f(\mathbf{x})+U(1-z).
        \end{aligned}
    \end{align*}
    and it holds $F(\mathbf{x})=s$.
\label{corollary:ieq}
\end{lemma}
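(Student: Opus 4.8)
The plan is to prove the statement by a direct case analysis on the sign of $f(\mathbf{x})$, using the binary variable $z$ as a selector between the two branches of the maximum. Throughout, I would fix $\mathbf{x}$ with $f(\mathbf{x}) \in [-U, U]$ and abbreviate $f := f(\mathbf{x})$, so that the four inequalities read $s \ge 0$, $s \le Uz$, $s \ge f$, $s \le f + U(1-z)$, and the target value is $F(\mathbf{x}) = \max\{f, 0\}$.

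First I would treat the case $f \ge 0$, where $F(\mathbf{x}) = f$. Taking $z = 1$ collapses the system to $s \ge 0$, $s \le U$, $s \ge f$, $s \le f$, which forces $s = f$; feasibility of this choice is exactly the hypothesis $0 \le f \le U$, so $(s,z) = (f,1)$ is a solution with $s = F(\mathbf{x})$. For uniqueness of $s$, I would examine the only other option $z = 0$: then $s \ge 0$ and $s \le Uz = 0$ force $s = 0$, and the constraint $s \ge f$ becomes $0 \ge f$, which is satisfiable only when $f = 0$, in which case it again yields $s = 0 = f$. Hence every feasible solution has the same $s$, while $z$ can take either value precisely when $f = 0$.

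Next I would handle $f < 0$, where $F(\mathbf{x}) = 0$. Taking $z = 0$ again forces $s = 0$, and the remaining constraints $s \ge f$ and $s \le f + U$ become $0 \ge f$ and $0 \le f + U$, i.e. $-U \le f \le 0$, both of which hold by hypothesis; so $(s,z) = (0,0)$ is a solution with $s = F(\mathbf{x})$. For uniqueness, the alternative $z = 1$ imposes $s \ge f$ and $s \le f$, hence $s = f < 0$, contradicting $s \ge 0$; thus $z = 1$ is infeasible and $s = 0$ is the unique $s$-value.

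Combining the two cases shows that for every admissible $\mathbf{x}$ the system is feasible, its $s$-component is unique and equal to $F(\mathbf{x}) = \max\{f(\mathbf{x}), 0\}$, and $z$ fails to be determined exactly on the set where $f(\mathbf{x}) = 0$. I do not anticipate a genuine obstacle: the argument is entirely elementary, and the only point demanding a little care is the boundary case $f(\mathbf{x}) = 0$, where one must verify that it is $z$, and not $s$, that becomes non-unique — consistent with the ``possibly non-unique $z$'' phrasing in the statement.
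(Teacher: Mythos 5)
Your case analysis is correct and complete: for $f(\mathbf{x})>0$ only $z=1$ is feasible and forces $s=f(\mathbf{x})$, for $f(\mathbf{x})<0$ only $z=0$ is feasible and forces $s=0$, and at $f(\mathbf{x})=0$ both choices of $z$ yield $s=0$, which matches the claimed uniqueness of $s$ and possible non-uniqueness of $z$. Note that the paper itself does not prove this lemma—it is imported verbatim from Adam et al.\ (Lemma C.1 of the cited reference) as a standard big-$M$ linearization of $\max\{f(\mathbf{x}),0\}$—so there is no in-paper argument to compare against; your elementary selector-variable proof is exactly the expected one.
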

Based on \lemmaref{corollary:ieq}, one has
\begin{lemma}
\itshape
    Let $f_1(\mathbf{x}), f_2(\mathbf{x})$ be a function of $\mathbb{R}^d \rightarrow \mathbb{R}$, $U>0$, and $F(\mathbf{x})=\max\{f_1(\mathbf{x}),f_2(\mathbf{x}),0\}$. For every $\mathbf{x}$ such that $f_1(\mathbf{x}), f_2(\mathbf{x}), f_1(\mathbf{x})-f_2(\mathbf{x}) \in [-U,U]$ there is a unique $s,t\in \mathbb{R}$ and a possible non-unique $z_1, z_2\in\{0,1\}$ solving the system
    \begin{align}
        \begin{aligned}
        s&\geq 0,             &s&\leq Uz_1,\\
        s&\geq f_1(\mathbf{x}), &s&\leq f_1(\mathbf{x})+U(1-z_1),\\
        t&\geq f_2(\mathbf{x}),             &s&\leq f_2(\mathbf{x})+Uz_2,\\
        t&\geq s, &t&\leq s+U(1-z_2).\\
        \end{aligned}
        \label{ieq:cor1}
    \end{align}
    and it holds $F(\mathbf{x})=t$. 
\label{corollary:ieq_new}
\end{lemma}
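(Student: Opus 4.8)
The plan is to derive \lemmaref{corollary:ieq_new} from two successive applications of the two-term gadget \lemmaref{corollary:ieq}, one for each nested maximum hidden inside $\max\{f_1,f_2,0\}$. By associativity of $\max$,
\[
  F(\mathbf{x})=\max\{f_1(\mathbf{x}),f_2(\mathbf{x}),0\}=\max\bigl\{f_2(\mathbf{x}),\ \max\{f_1(\mathbf{x}),0\}\bigr\},
\]
so I would introduce $s:=\max\{f_1(\mathbf{x}),0\}$ and let $t:=\max\{f_2(\mathbf{x}),s\}$, which forces $t=F(\mathbf{x})$.

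\emph{First application.} Apply \lemmaref{corollary:ieq} to the function $f_1$ with the same constant $U$. Since $f_1(\mathbf{x})\in[-U,U]$ by hypothesis, it produces a unique $s\in\mathbb{R}$ and a possibly non-unique $z_1\in\{0,1\}$ obeying precisely the first two rows of \eqref{ieq:cor1}, and moreover $s=\max\{f_1(\mathbf{x}),0\}$.

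\emph{Second application.} Rewrite $t=\max\{f_2(\mathbf{x}),s\}=f_2(\mathbf{x})+\max\{s-f_2(\mathbf{x}),0\}$ and apply \lemmaref{corollary:ieq} a second time, now to the (already determined) real number $s-f_2(\mathbf{x})$. The lemma's hypothesis requires $s-f_2(\mathbf{x})\in[-U,U]$, and this is the single point at which all three standing bounds $f_1,f_2,f_1-f_2\in[-U,U]$ get used: splitting on the sign of $f_1(\mathbf{x})$ gives $s-f_2(\mathbf{x})=f_1(\mathbf{x})-f_2(\mathbf{x})$ if $f_1(\mathbf{x})\ge 0$ and $s-f_2(\mathbf{x})=-f_2(\mathbf{x})$ if $f_1(\mathbf{x})<0$, and each of these lies in $[-U,U]$. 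The lemma then yields a unique $r:=\max\{s-f_2(\mathbf{x}),0\}$ and a possibly non-unique $z_2\in\{0,1\}$; setting $t:=f_2(\mathbf{x})+r$ and substituting $r=t-f_2(\mathbf{x})$ into the four inequalities of \lemmaref{corollary:ieq} recovers exactly the last two rows of \eqref{ieq:cor1} (with the right-hand member of the third row read as $t\le f_2(\mathbf{x})+Uz_2$), while $t=f_2(\mathbf{x})+\max\{s-f_2(\mathbf{x}),0\}=\max\{f_2(\mathbf{x}),s\}=F(\mathbf{x})$.

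It remains to package the uniqueness claim: composing the two uniqueness statements of \lemmaref{corollary:ieq}, $s$ is pinned down by the first application and, with $s$ held fixed, $r$ (hence $t=f_2(\mathbf{x})+r$) is pinned down by the second, while $z_1,z_2$ inherit exactly the non-uniqueness that \lemmaref{corollary:ieq} permits. The only genuinely non-routine step — and the place I expect to need care — is the range verification $s-f_2(\mathbf{x})\in[-U,U]$ in the second application, which is precisely why the statement carries the extra hypothesis $f_1-f_2\in[-U,U]$ on top of $f_1,f_2\in[-U,U]$; everything else is substitution and relabeling.
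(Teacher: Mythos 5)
Your proposal is correct and follows essentially the same route as the paper's proof: decompose $\max\{f_1(\mathbf{x}),f_2(\mathbf{x}),0\}=\max\{\max\{f_1(\mathbf{x}),0\},f_2(\mathbf{x})\}=f_2(\mathbf{x})+\max\{s-f_2(\mathbf{x}),0\}$, apply \lemmaref{corollary:ieq} once to $f_1(\mathbf{x})$ and once to $s-f_2(\mathbf{x})$, and substitute $t=f_2(\mathbf{x})+\max\{s-f_2(\mathbf{x}),0\}$ to recover the last two rows of \eqref{ieq:cor1}. Your explicit check that $s-f_2(\mathbf{x})\in[-U,U]$ (splitting on the sign of $f_1(\mathbf{x})$), and your reading of the third-row bound as $t\leq f_2(\mathbf{x})+Uz_2$, are details the paper leaves implicit but do not change the argument.
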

\begin{proof}
$\forall \mathbf{x}$, s.t. $f_1(\mathbf{x}), f_2(\mathbf{x}), f_1(\mathbf{x})-f_2(\mathbf{x}) \in[-U,U]$, there is 
\begin{align*}
    \max\{f_1(\mathbf{x}), f_2(\mathbf{x}), 0\}&=\max\{\max\{f_1(\mathbf{x}),0\}, f_2(\mathbf{x})\}\\
    &=\max\{s, f_2(\mathbf{x})\}\\
    &=\max\{s-f_2(\mathbf{x}),0\}+f_2(\mathbf{x})\\
    &=t'+f_2(\mathbf{x}).
\end{align*}
where $s$ is the solution of the inequality system
\begin{align*}
    s&\geq 0,             &s&\leq Uz_1,\\
    s&\geq f_1(\mathbf{x}), &s&\leq f_1(\mathbf{x})+U(1-z_1).
\end{align*}
and $s=\max\{f_1(\mathbf{x}),0\}$, and $t'$ is the solution of the inequality system
\begin{align*}
    t'&\geq 0,             &t'&\leq Uz_2,\\
    t'&\geq s-f_2(\mathbf{x}), &t'&\leq s-f_2(\mathbf{x})+U(1-z_2).
\end{align*}
and $t'=\max\{s-f_2(\mathbf{x}),0\}$, $z_1, z_2\in\{0,1\}$, according to \lemmaref{corollary:ieq}.

Denote $\max\{f_1(\mathbf{x}), f_2(\mathbf{x}), 0\}$ as $t$, following $t=t'+f_2(\mathbf{x})$. Substituting $t$ into the linear inequalities above, we have
\begin{align*}
    t&\geq f_2(\mathbf{x}),             &t&\leq f_2(\mathbf{x})+Uz_2,\\
    t&\geq s, &t&\leq s+U(1-z_2).
\end{align*}
\end{proof}
Therefore we derive the linear inequalities equivalent to \eqref{eq:pi_updated} by replacing the $\max\{\cdot\}$ operation with \eqref{ieq:cor1}. 

With \lemmaref{corollary:ieq} and \lemmaref{corollary:ieq_new}, the optimization problem of \eqref{eq:bst_st} can be transformed into mixed integer linear programming (MILP), which can be solved by the optimization solver \texttt{gurobi} \cite{gurobi}. With utility function as $u_\texttt{CDH}(\cdot)$, the optimization problem \eqref{eq:bst_st} can be reformulated as follows:

Given Player 2's $j$-th step mixed strategy 
$\mathbf{\Delta}_Y^{j*}\sim \text{Multinomial}(P_y)$,  
\begin{align}
\begin{aligned}
\max_{\mathbf{S}_x \in \mathcal{X}(\mathbf{d}_x)}\sum_{i=1}^KP_y(\mathbf{S}_y^i)\sum_{j=1}^N\pi(\mathbf{S}_{x,j}-\mathbf{S}_{y,j}^i).
\end{aligned} 
\label{eq:linearization}
\end{align}
According to \cite[Appendix C]{adam2021double}, for any $i\in\{1,2,\cdots,K\}$, $j\in\{1,2,\cdots,N\}$ and a constant $C$, there is
\begin{align*}
\begin{aligned}
    &\pi(\mathbf{S}_{x,j}-\mathbf{S}_{y,j}^i)\\
    =&\max\{\frac{1}{C}(\pi_{\texttt{oi}}(\mathbf{S}_{x,j}-\mathbf{S}_{y,j}^i)+C),0\}\\
    &-\max\{\frac{1}{C}(\pi_{\texttt{oi}}(\mathbf{S}_{x,j}-\mathbf{S}_{y,j}^i)-C),0\}-1.
\end{aligned}
\end{align*}
or, for simplicity, we can denote $\pi(\mathbf{S}_{x,j}-\mathbf{S}_{y,j}^i)$ as $s_{ij}-t_{ij}-1$ where $s_{ij}=\max\{\frac{1}{C}(\pi_{\texttt{oi}}(\mathbf{S}_{x,j}-\mathbf{S}_{y,j}^i)+C),0\}$ and $t_{ij}=\max\{\frac{1}{C}(\pi_{\texttt{oi}}(\mathbf{S}_{x,j}-\mathbf{S}_{y,j}^i)-C),0\}$. With \lemmaref{corollary:ieq}, we obtain $s_{ij}$ and $t_{ij}$ in the form of linear inequalities system as follows:
\begin{align*}
    s_{ij}&\geq 0,\\
    s_{ij}&\leq U_{ij}^sz_{ij},\\
    s_{ij}&\leq \frac{1}{C}(\pi_{\texttt{oi}}(\mathbf{S}_{x,j}-\mathbf{S}_{y,j}^i)+C)+U_{ij}^s(1-z_{ij}),\\
    s_{ij}&\geq \frac{1}{C}(\pi_{\texttt{oi}}(\mathbf{S}_{x,j}-\mathbf{S}_{y,j}^i)+C),\\ 
    t_{ij}&\geq 0,\\
    t_{ij}&\leq U_{ij}^tw_{ij},\\
    t_{ij}&\leq \frac{1}{C}(\pi_{\texttt{oi}}(\mathbf{S}_{x,j}-\mathbf{S}_{y,j}^i)-C)+U_{ij}^t(1-w_{ij}),\\
    t_{ij}&\geq \frac{1}{C}(\pi_{\texttt{oi}}(\mathbf{S}_{x,j}-\mathbf{S}_{y,j}^i)-C).
\end{align*}
where $z_{ij},w_{ij}\in\{0,1\}$ and $\frac{1}{C}(\pi_{\texttt{oi}}(\mathbf{S}_{x,j}-\mathbf{S}_{y,j}^i)+C)\in[-U_{ij}^s,U_{ij}^s]$, $\frac{1}{C}(\pi_{\texttt{oi}}(\mathbf{S}_{x,j}-\mathbf{S}_{y,j}^i)-C)\in[-U_{ij}^t,U_{ij}^t]$, $\forall \mathbf{S}_{x}\in \mathcal{X}(\mathbf{d}_x)$. We can replace the $\pi_{\texttt{oi}}(\cdot)$ in the linear inequalities as discussed above. Specifically, according to \eqref{eq:pi_updated}, one has
\begin{align*}
    \begin{aligned}
        \pi_{\texttt{oi}}(\mathbf{S}_{x,j}-\mathbf{S}_{y,j}^i)=g_1(\mathbf{S}_{x,j}-\mathbf{S}_{y,j}^i)+p_{ij}-q_{ij}.
    \end{aligned}
\end{align*}
where $p_{ij}=\max\{g_{21}(\mathbf{S}_{x,j}-\mathbf{S}_{y,j}^i),\; g_{23}(\mathbf{S}_{x,j}-\mathbf{S}_{y,j}^i),\; 0\}$ and $q_{ij}=\max\{g_{13}(\mathbf{S}_{x,j}-\mathbf{S}_{y,j}^i),\; g_{23}(\mathbf{S}_{x,j}-\mathbf{S}_{y,j}^i),\; 0\}$, $g_{21}(\cdot)=g_2(\cdot)-g_1(\cdot)$, $g_{13}(\cdot)=g_1(\cdot)-g_3(\cdot)$, $g_{23}(\cdot)=g_2(\cdot)-g_3(\cdot)$ for simplicity. Moreover, according to \lemmaref{corollary:ieq_new} we can write $p_{ij}$ and $q_{ij}$ into the form of linear inequalities. Finally, the optimization problem in \eqref{eq:linearization} is reformulated into a mixed integer linear optimization problem as follows.

Given Player 2's $j$-th step mixed strategy $\mathbf{\Delta}_Y^{j*}\sim \text{Multinomial}(P_y)$,
\begin{align*}
    \max_{\mathbf{S}_x \in \mathcal{X}(\mathbf{d}_x)}&\sum_{i=1}^KP_y(\mathbf{S}_y^i)\sum_{j=1}^N(s_{ij}-t_{ij}-1)\\
    s.t.\quad &s_{ij}\geq 0,\quad s_{ij}\leq U_{ij}^sz_{ij},\\
    &s_{ij}\leq \frac{1}{C}(g_1(\mathbf{S}_{x,j}-\mathbf{S}_{y,j}^i)+p_{ij}-q_{ij}+C)+\\
    &\qquad\ \; U_{ij}^s(1-z_{ij}),\\
    &s_{ij}\geq \frac{1}{C}(g_1(\mathbf{S}_{x,j}-\mathbf{S}_{y,j}^i)+p_{ij}-q_{ij}+C),\\ 
    &t_{ij}\geq 0,\quad t_{ij}\leq U_{ij}^tw_{ij},\\
    &t_{ij}\leq \frac{1}{C}(g_1(\mathbf{S}_{x,j}-\mathbf{S}_{y,j}^i)+p_{ij}-q_{ij}-C)+\\
    &\qquad\ \; U_{ij}^t(1-w_{ij}),\\
    &t_{ij}\geq \frac{1}{C}(g_1(\mathbf{S}_{x,j}-\mathbf{S}_{y,j}^i)+p_{ij}-q_{ij}-C),\\
    &\delta^p_{ij}\geq 0,\quad \delta^p_{ij}\leq U^{\delta^p}_{ij}z^{\delta^p}_{ij},\\
    &\delta^p_{ij}\geq g_{21}(\mathbf{S}_{x,j}-\mathbf{S}_{y,j}^i),\\
    &\delta^p_{ij}\leq g_{21}(\mathbf{S}_{x,j}-\mathbf{S}_{y,j}^i)+U^{\delta^p}_{ij}(1-z^{\delta^p}_{ij}),\\
    &p_{ij}\geq \delta^p_{ij},\quad p_{ij}\leq \delta^p_{ij}+U^{pq}_{ij}(1-z^p_{ij}),\\
    &p_{ij}\geq g_{23}(\mathbf{S}_{x,j}-\mathbf{S}_{y,j}^i),\\
    &p_{ij}\leq g_{23}(\mathbf{S}_{x,j}-\mathbf{S}_{y,j}^i)+U^{pq}_{ij}z^p_{ij},\\
    &\delta^q_{ij}\geq 0,\quad \delta^q_{ij}\leq U^{\delta^q}_{ij}z^{\delta^q}_{ij},\\
    &\delta^q_{ij}\geq g_{13}(\mathbf{S}_{x,j}-\mathbf{S}_{y,j}^i),\\
    &\delta^q_{ij}\leq g_{13}(\mathbf{S}_{x,j}-\mathbf{S}_{y,j}^i)+U^{\delta^q}_{ij}(1-z^{\delta^q}_{ij}),\\
    &q_{ij}\geq \delta^q_{ij},\quad q_{ij}\leq \delta^q_{ij}+U^{pq}_{ij}(1-z^q_{ij}),\\
    &q_{ij}\geq g_{23}(\mathbf{S}_{x,j}-\mathbf{S}_{y,j}^i),\\
    &q_{ij}\leq g_{23}(\mathbf{S}_{x,j}-\mathbf{S}_{y,j}^i)+U^{pq}_{ij}z^q_{ij}.
\end{align*}
with
\begin{align*}
    s_{ij}, t_{ij}, p_{ij}, q_{ij}&\in \mathbb{R},\\
    z_{ij}, w_{ij}, z^{\delta^p}_{ij},z^{\delta^q}_{ij},z^p_{ij},z^q_{ij}&\in \{0,1\},\\
    \frac{1}{C}(\pi_{\texttt{oi}}(\mathbf{S}_{x,j}-\mathbf{S}_{y,j}^i)+C)&\in[-U_{ij}^s,U_{ij}^s],\\
    \frac{1}{C}(\pi_{\texttt{oi}}(\mathbf{S}_{x,j}-\mathbf{S}_{y,j}^i)-C)&\in[-U_{ij}^t,U_{ij}^t],\\
    g_{21}(\mathbf{S}_{x,j}-\mathbf{S}_{y,j}^i)&\in[-U_{ij}^{\delta^p},U_{ij}^{\delta^p}],\\
    g_{13}(\mathbf{S}_{x,j}-\mathbf{S}_{y,j}^i)&\in[-U_{ij}^{\delta^q},U_{ij}^{\delta^q}],\\
    g_{23}(\mathbf{S}_{x,j}-\mathbf{S}_{y,j}^i)&\in[-U_{ij}^{pq},U_{ij}^{pq}].
\end{align*}
Note that $g_{21}(\cdot), g_{13}(\cdot), g_{23}(\cdot)$ are linear functions and $\mathbf{S}_{x,j}-\mathbf{S}_{y,j}^i$ is bounded. Thus $g_{21}(\mathbf{S}_{x,j}-\mathbf{S}_{y,j}^i), g_{13}(\mathbf{S}_{x,j}-\mathbf{S}_{y,j}^i), g_{23}(\mathbf{S}_{x,j}-\mathbf{S}_{y,j}^i)$ are all bounded. $U_{ij}^{\delta^p}, U_{ij}^{\delta q}, U_{ij}^{pq}$ are the large numbers to include their bounds. Since $\pi_{\texttt{oi}}(\mathbf{S}_{x,j}-\mathbf{S}_{y,j}^i)$ is piecewise linear, both $\frac{1}{C}(\pi_{\texttt{oi}}(\mathbf{S}_{x,j}-\mathbf{S}_{y,j}^i)+C)$ and $\frac{1}{C}(\pi_{\texttt{oi}}(\mathbf{S}_{x,j}-\mathbf{S}_{y,j}^i)-C)$ are bounded and $U_{ij}^s, U_{ij}^t$ are large number to include their bounds. Besides, since $g_i(\cdot)$ are linear functions, thus all $g_i(\mathbf{S}_{x,j}-\mathbf{S}_{y,j}^i)$ above can be written as $g_i(\mathbf{S}_{x,j})-g_i(\mathbf{S}_{y,j}^i)$.
\section{Numerical Evaluation} 
\label{sec: evaluation}
\begin{figure}[!tbp]
\centering
    \includegraphics[width=1\columnwidth]{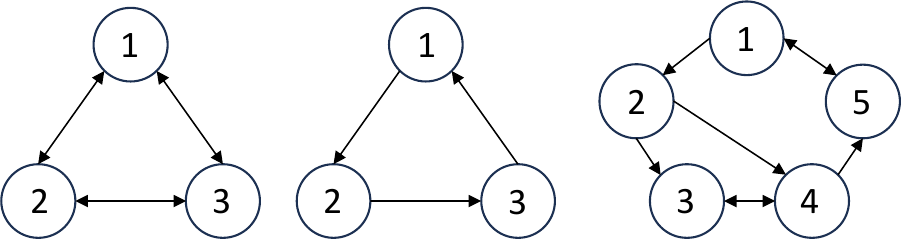}
    \caption {Three different graphs: $G_1$ (left), $G_2$ (middle), $G_3$ (right).}
    \label{fig: paper_3cases}
\end{figure}

\begin{figure}[!tbp]
\centering
    \includegraphics[height=2in]{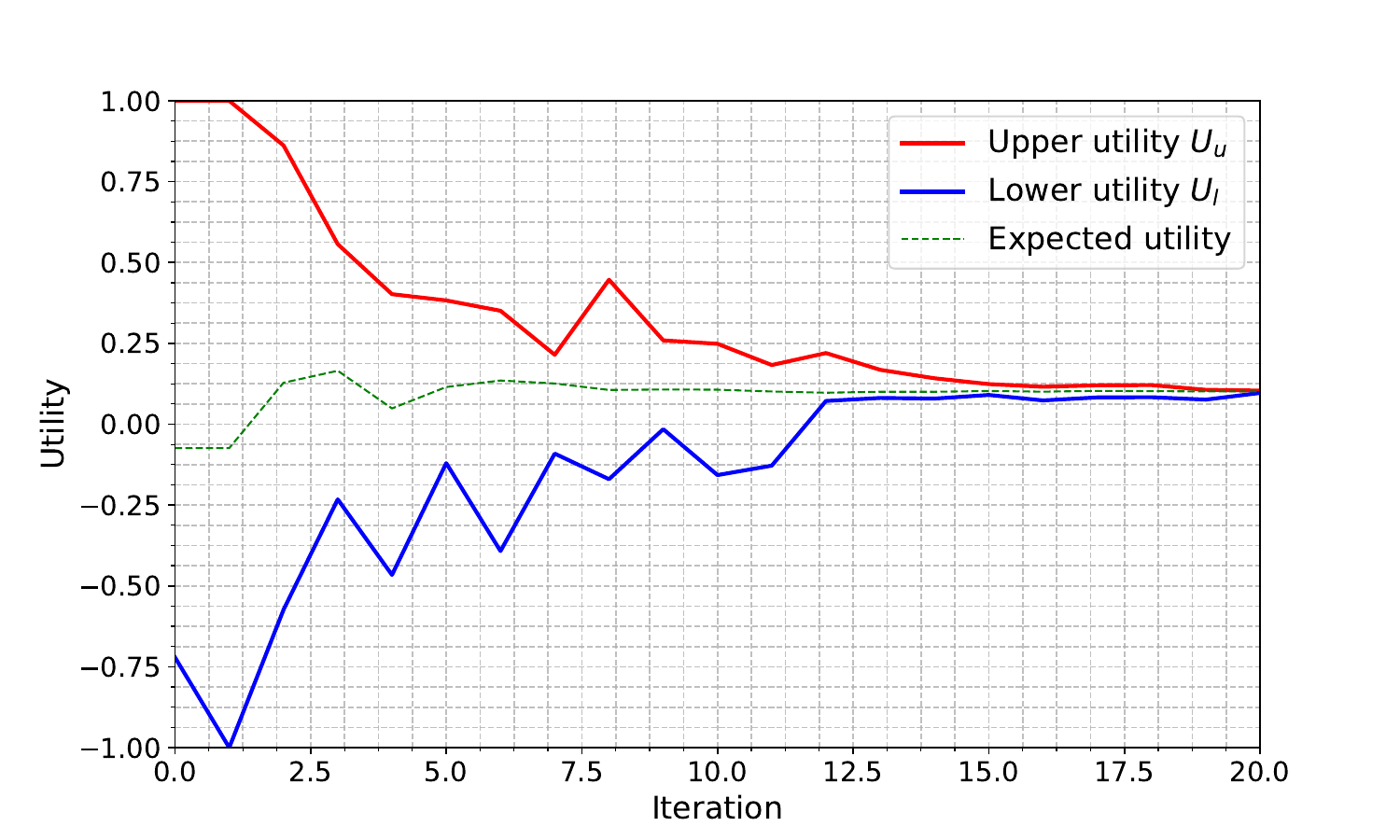}
    \caption{DOA achieves the equilibrium in homogeneous robot allocation on $G_2$. The blue curve shows the upper utility of the game $U_u$ and the red curve shows the lower utility of the game $U_l$ (\algref{alg:doa}). The green curve shows the expected utility of the game in each iteration. Runtime is 4.243s.} 
    \label{fig:homo_convergence}
\end{figure}
\begin{figure}[!tbp]
\centering
    \includegraphics[height=2in]{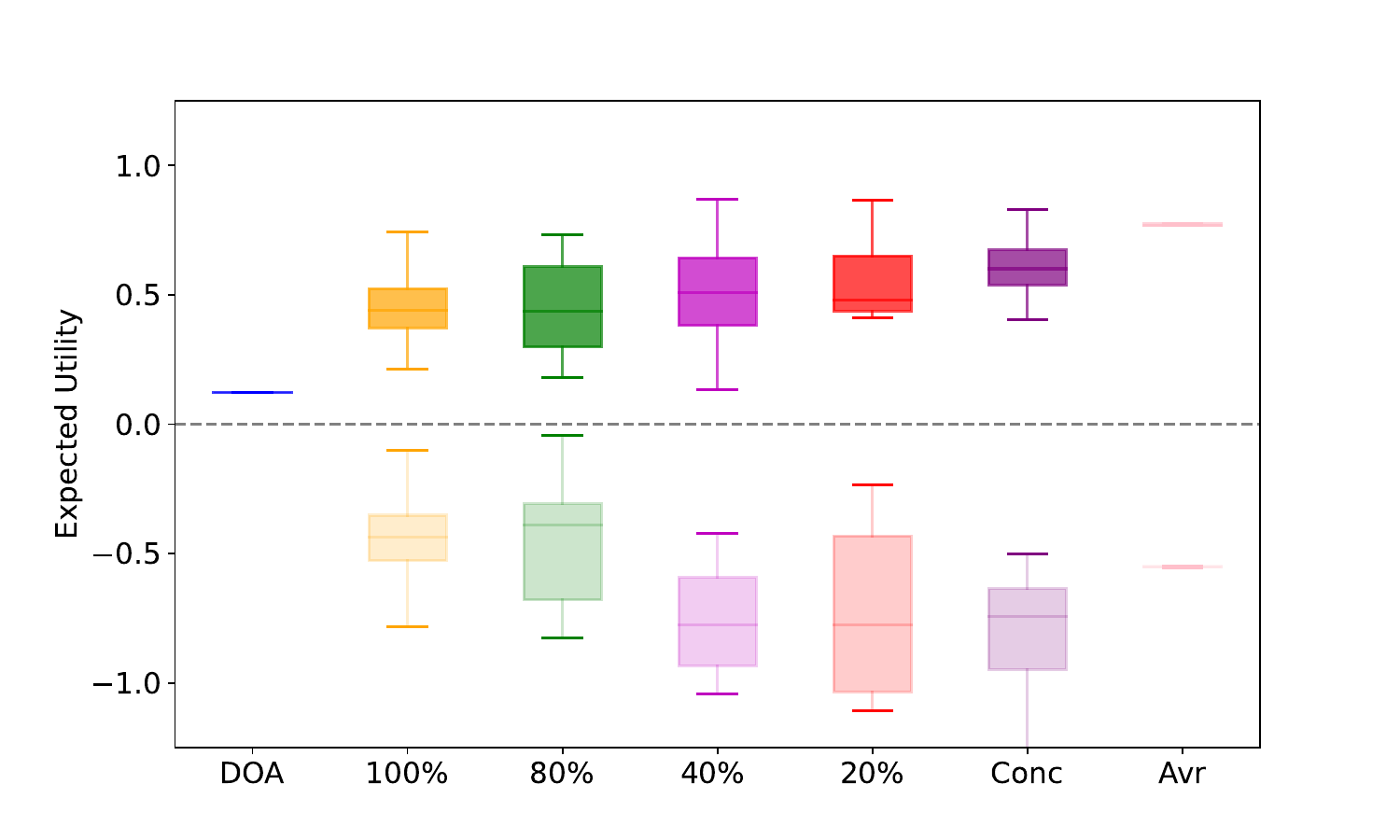}
    \caption{Comparison of the expected utilities by DOA and other baselines for homogeneous robot allocation on $G_2$.}
    \label{fig: evaluation_homo}
\end{figure}
\begin{figure*}[!tbp]
    \centering
    \includegraphics[width=1.98\columnwidth]{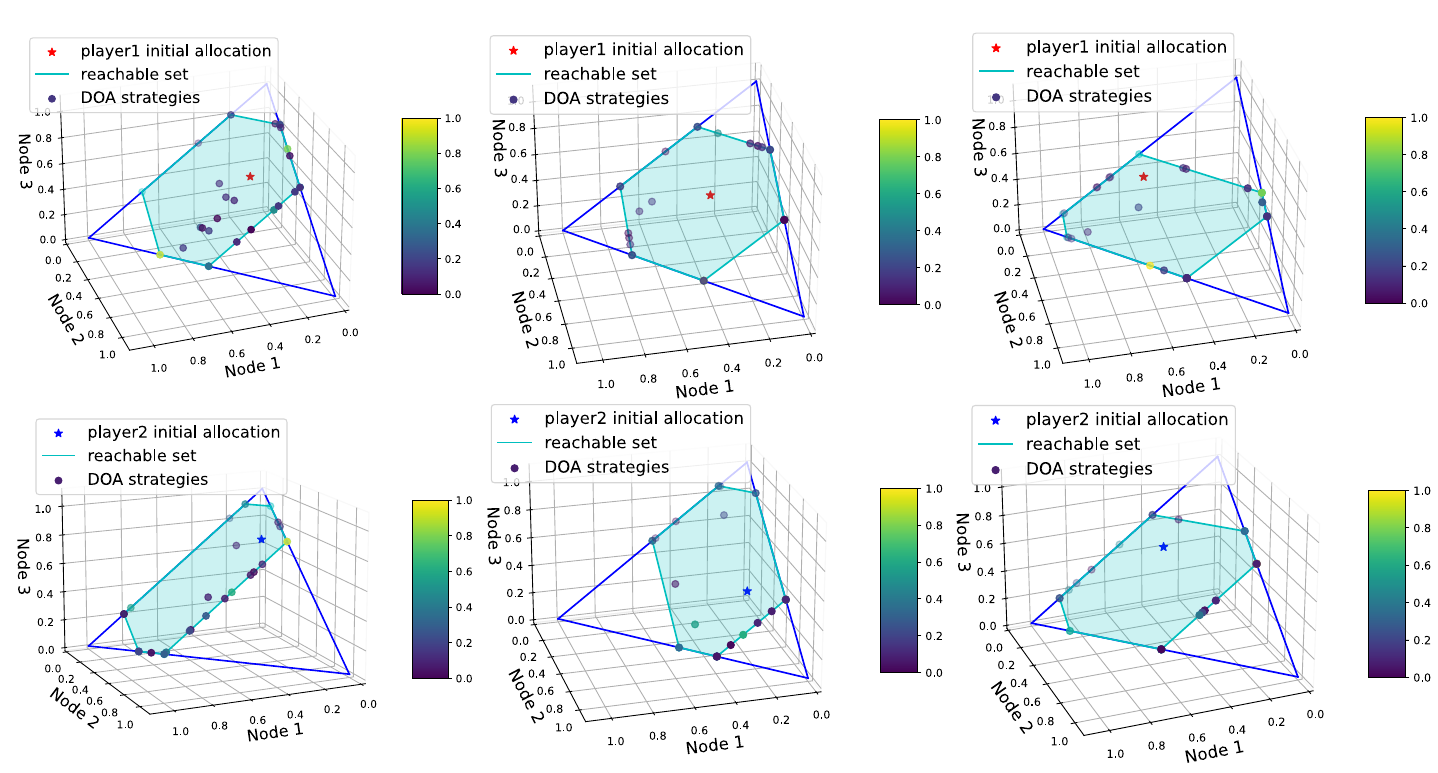}
    \caption{Illustration of the reachable sets and mixed strategies calculated by DOA. The top three subfigures show the mixed strategies of Player 1 and the bottom three subfigures show that of Player 2. In each subfigure, the light blue region is the next-step reachable set for the initial allocation (represented by the red and blue star for Player 1 and Player 2, respectively). The colorful dots on the reachable set are the mixed strategies for two players, each dot representing one pure strategy. The distribution of the mixed strategies is represented by the heatmap on the right-hand side of each subfigure. The heatmap, transitioning from blue to yellow, represents the relative probability $\frac{p-p_{\texttt{min}}}{p_{\texttt{max}}-p_{\texttt{min}}}$ of a pure strategy, ranging from low to high.}
    \label{fig:illustration}
\end{figure*}
\begin{figure}[!tbp]
\centering
    \includegraphics[height=2in]{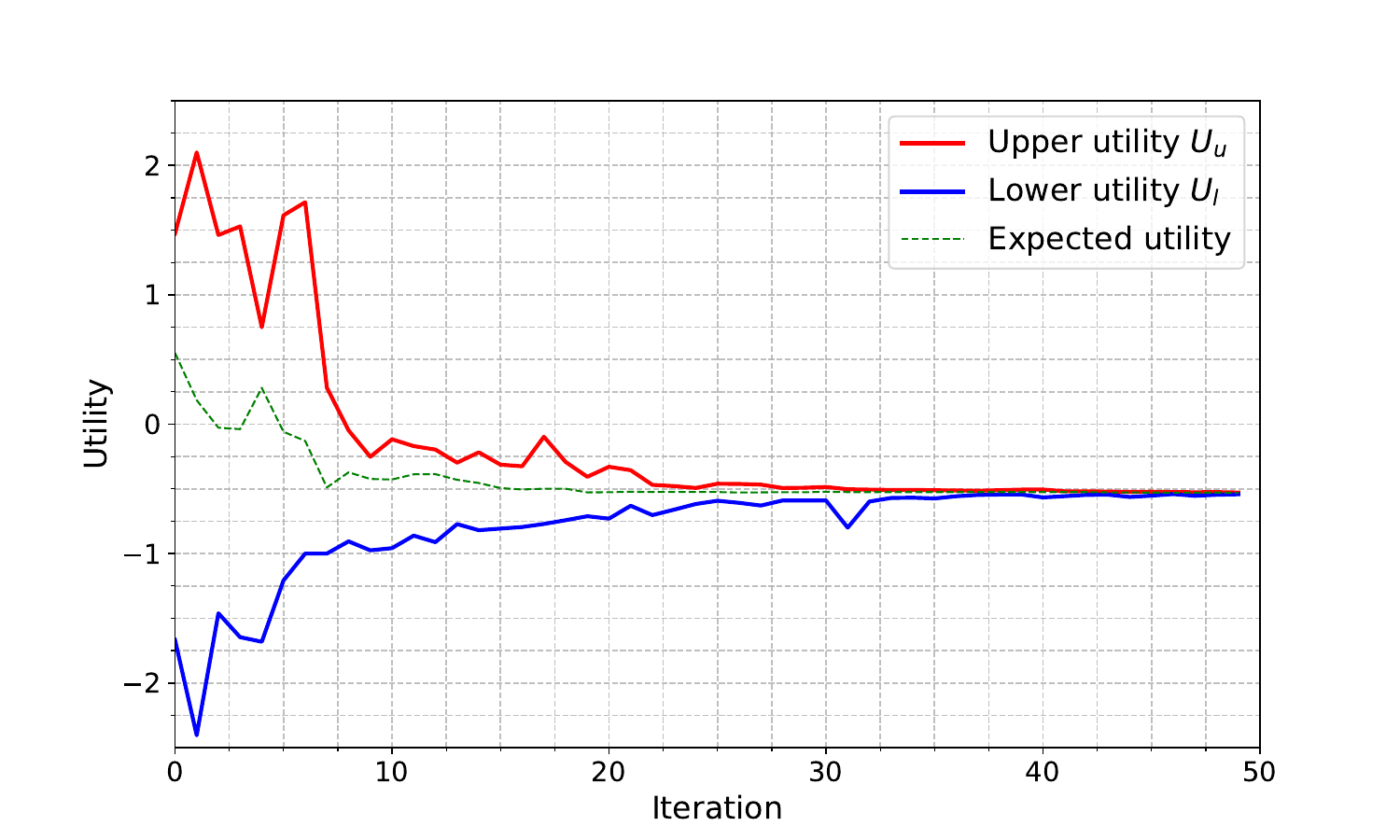}
    \caption{DOA achieves equilibrium (utility -0.53) in CDH robot allocation on $G_2$. The blue curve shows the upper value of the game $U_u$ and the red curve shows the lower value of the game $U_l$ (\algref{alg:doa}). The green curve shows the expected utility of the game in each iteration. Runtime is 3251s.}
    \label{fig:cdh_convergence}
\end{figure}
\begin{figure}[!tbp]
\centering
    \includegraphics[height=2in]{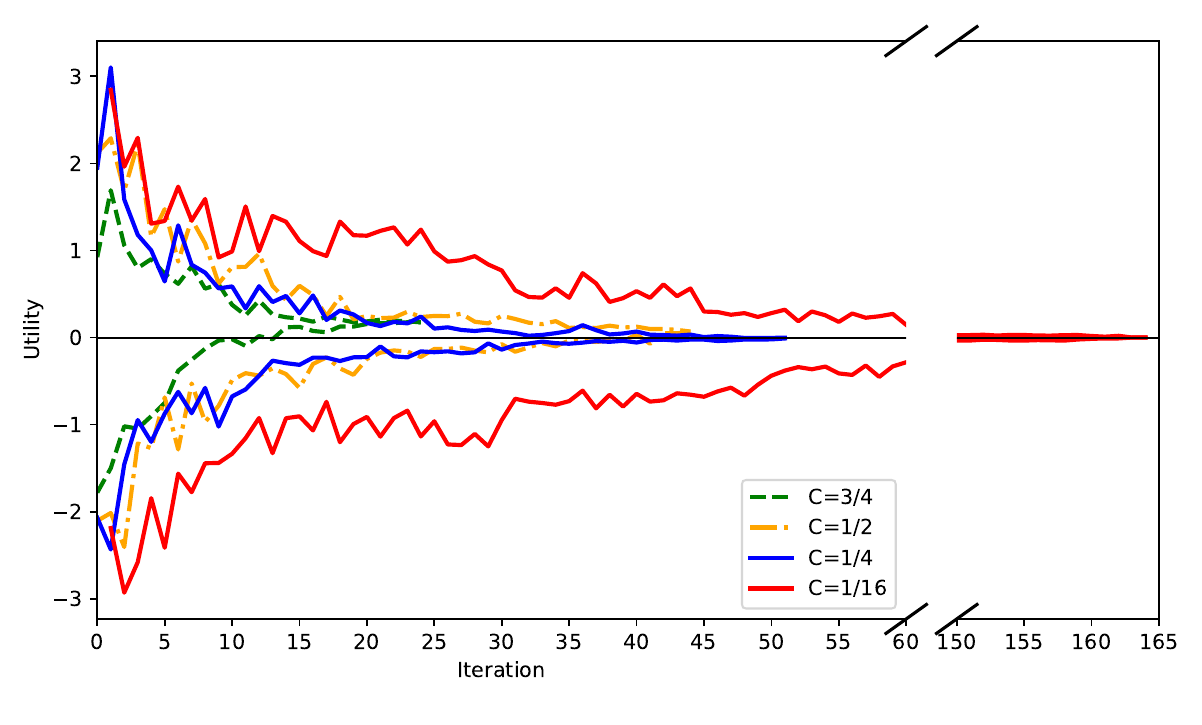}
    \caption{Convergence and runtime with respect to $C$ on $G_1$. When $C=3/4$, runtime is 233s; when $C=1/2$, runtime is 1029s; when $C=1/4$, runtime is 2760s; and when $C=1/16$, runtime is 94173s.}
    \label{fig: converge_comparison}
\end{figure}
\begin{figure*}[!tbp]
    \centering
    \subfloat[On $G_1$]{%
    \label{fig: evaluation_homo_comp}
    \includegraphics[width=0.33\textwidth]{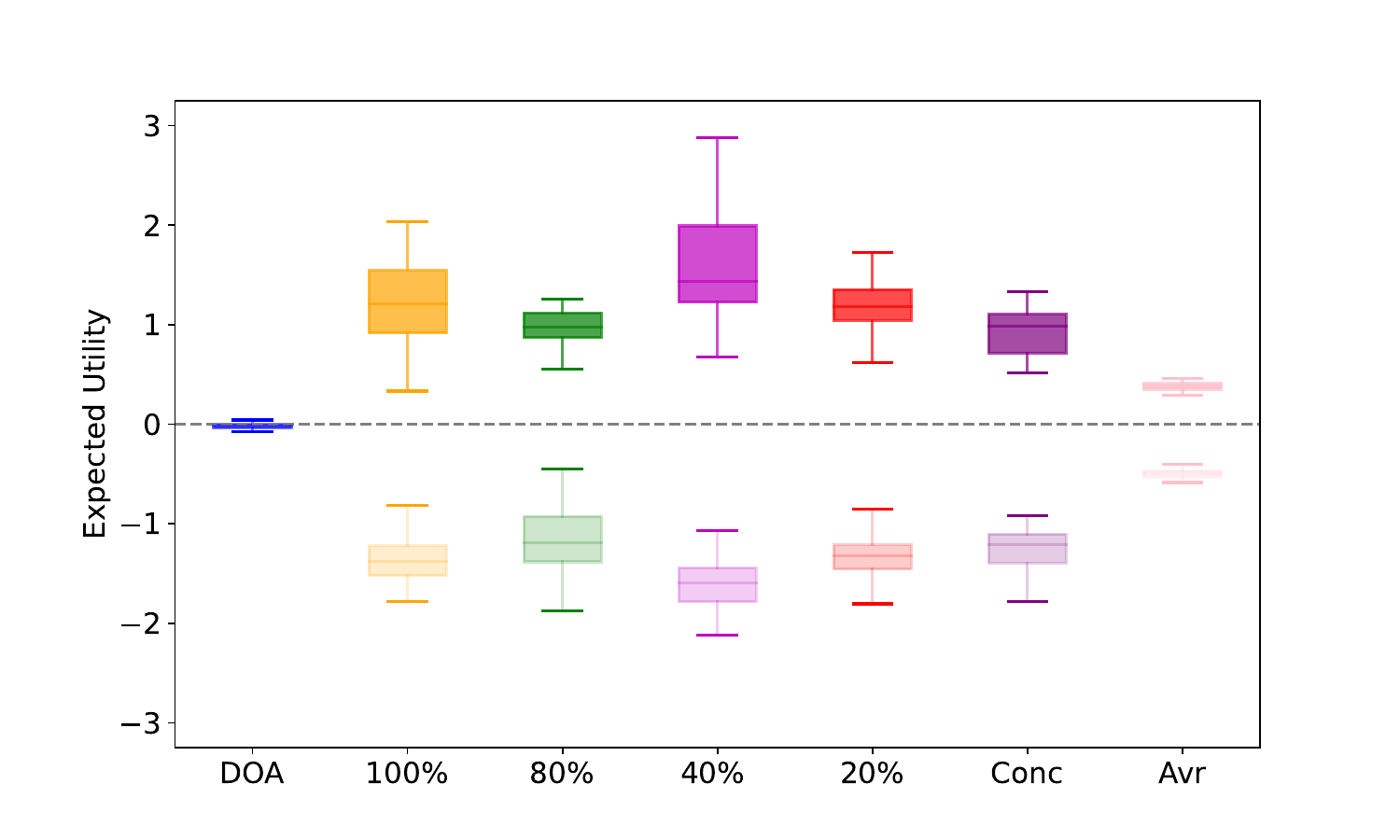}}%
    \hspace{-1em}  
    \subfloat[On $G_2$]{%
    \label{fig: evaluation_heter}
    \includegraphics[width=0.33\textwidth]{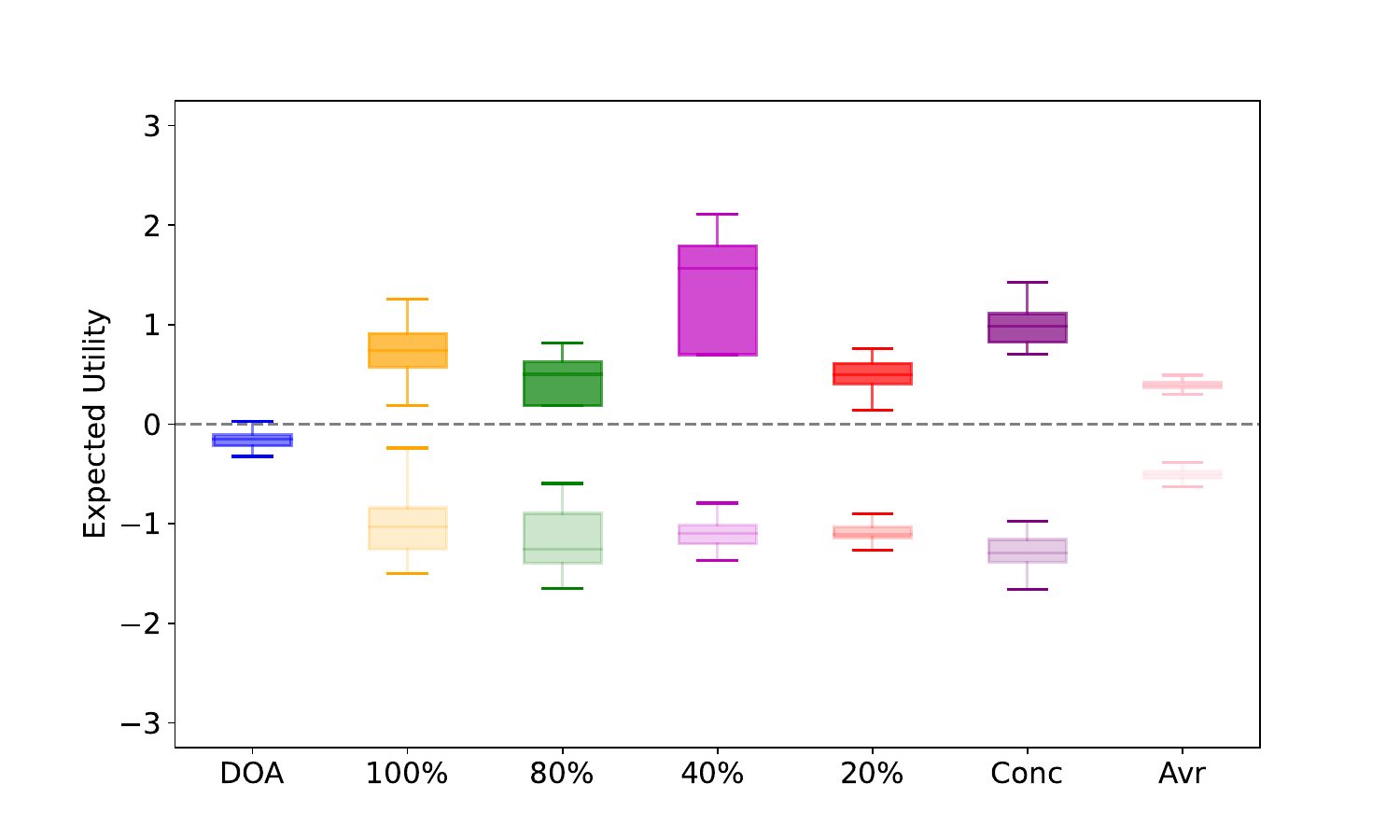}}%
    \hspace{-1em}  
    \subfloat[On $G_3$]{%
    \label{fig: evaluation_homo_5}
    \includegraphics[width=0.33\textwidth]{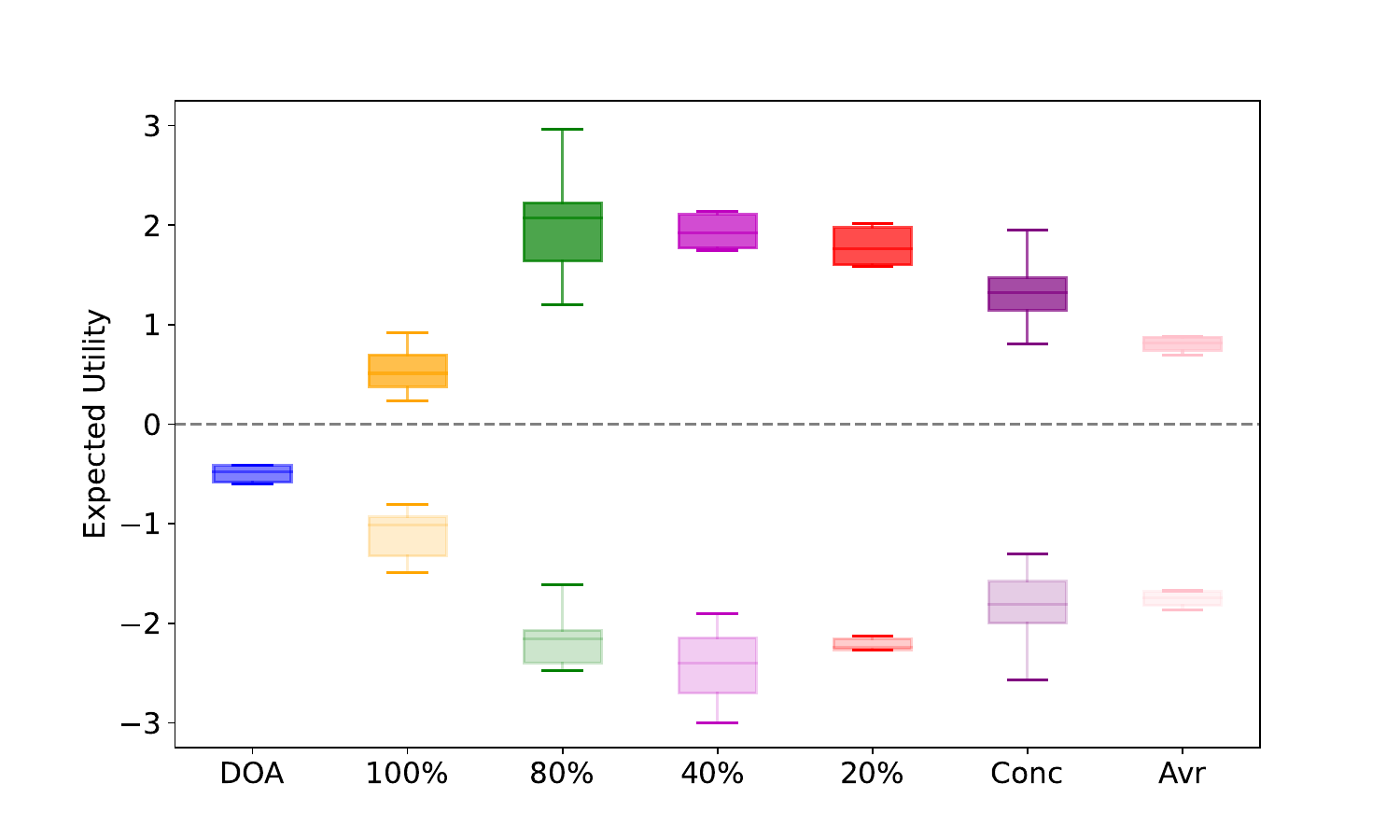}}%
    \caption{Comparison of the expected utilities by DOA and other baselines for CDH robot allocation on graphs (a) $G_1$, (b) $G_2$, and (c) $G_3$.}
    \label{fig:HETER_result}
\end{figure*}

\begin{figure}[!tbp]
\centering
    \label{fig: evaluation_heter_ex}
    \includegraphics[height=2in]{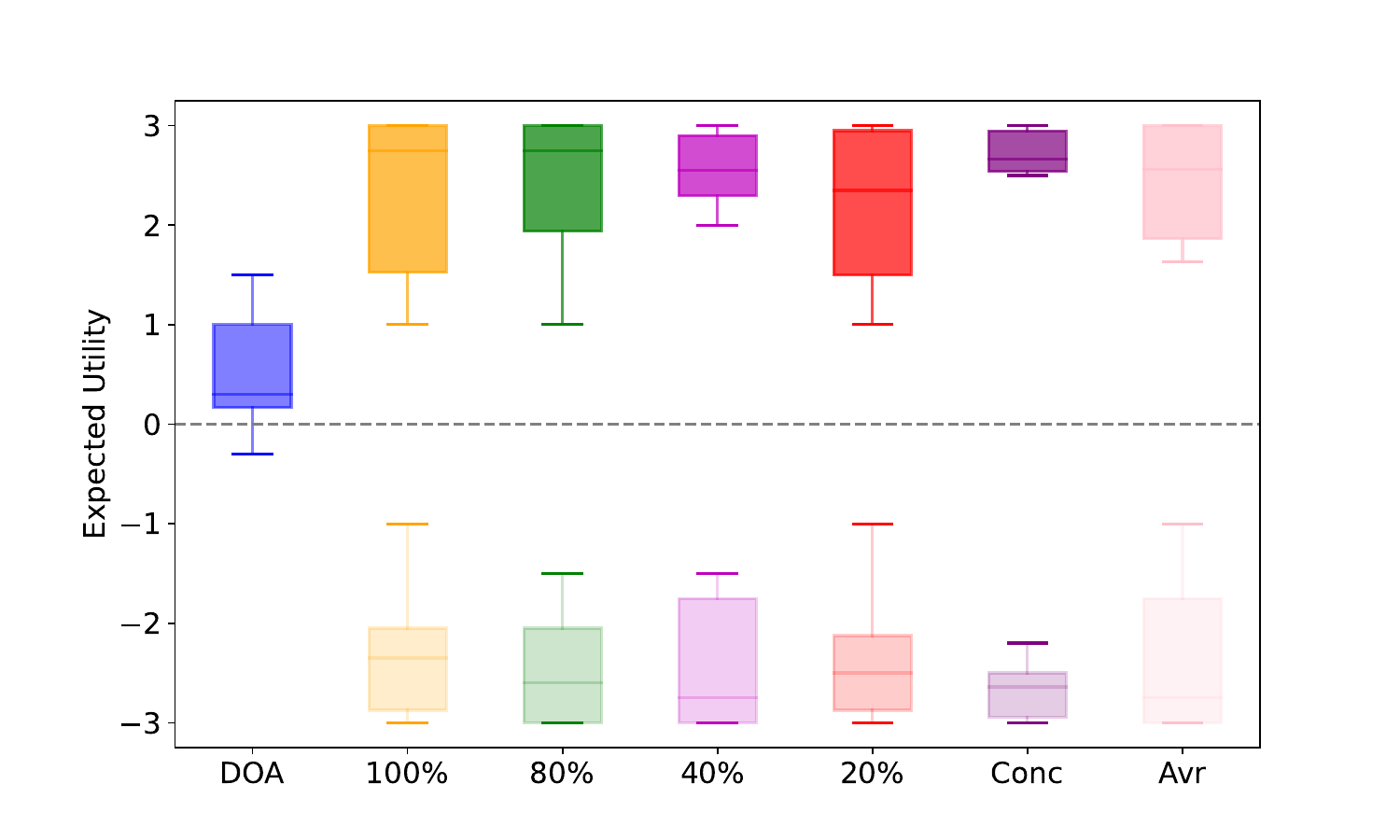}
    \caption{Comparison of the expected utilities by DOA and other baselines for homogeneous robot allocation on $G_2$ with $\mathbf{I}_{12}=\mathbf{I}_{23}=\mathbf{I}_{31}=500$.}
    \label{fig:result}
\end{figure}
In this section, we conduct numerical simulations to demonstrate the effectiveness of DOA in computing the Nash equilibrium for the robot allocation games with homogeneous, linear heterogeneous, and CDH robots. 
Following the settings in~\cite{berman2009optimized}, we use the fraction of the robot population instead of the number of robots to represent the allocation amount. Here, $\mathbf{S}_{ij}$ denotes the fraction of the population of the $i$-th type robots allocated on the $j$-th node. Then we have $\sum_j\mathbf{S}_{ij}=1$ for the $i$-th robot type. The initial robot distributions $\mathbf{d}_x$ and $\mathbf{d}_y$ are also the fractions of the robot population with $\sum_j\mathbf{d}_{x}^i=\sum_j\mathbf{d}_{y}^i=1$.
With homogeneous and linear heterogeneous robots, we consider two players to allocate robots on a three-node directed graph $G_2=(\mathcal{V}_2,\mathcal{E}_2)$ with $\mathcal{V}_2=\{1,2,3\}$ and $\mathcal{E}_2=\{(1,2),(2,3),(3,1)\}$, i.e., $G_2$ in \figref{fig: paper_3cases}. 
For the CDH robot allocation, We evaluate three different cases. In the first case, two players allocate robots on a three-node directed graph $G_1=(\mathcal{V}_1,\mathcal{E}_1)$ with $\mathcal{V}_1=\{1,2,3\}$ and $\mathcal{E}_1=\{(1,2),(1,3),(2,1),(2,3),(3,1),(3,2)\}$, i.e., $G_1$ in \figref{fig: paper_3cases}. In the second case, two players allocate robots on a thee-node directed graph $G_2=(\mathcal{V}_2,\mathcal{E}_2)$ with $\mathcal{V}_2=\{1,2,3\}$ and $\mathcal{E}_2=\{(1,2),(2,3),(3,1)\}$, i.e., $G_2$ in \figref{fig: paper_3cases}. In the third case, two players allocate robots on a five-node directed graph $G_3=(\mathcal{V}_3,\mathcal{E}_3)$ with $\mathcal{V}_3=\{1,2,3,4,5\}$ and $\mathcal{E}_3=\{(1,2),(1,5),(2,3),(2,4),(3,4),(4,3),(4,5),(5,1)\}$, i.e., $G_3$ in \figref{fig: paper_3cases}. 

The initial robot distribution is denoted as $\mathbf{d}_x$ for Player 1 and $\mathbf{d}_y$ for Player 2. The reachable set $\mathcal{X}(\mathbf{d}_x)$ and $\mathcal{Y}(\mathbf{d}_x)$ (light blue region in \figref{fig:illustration}) is calculated based on the extreme action space $\hat{\mathcal{T}}_x$ and $\hat{\mathcal{T}}_y$ as in \eqref{eq:extreme} of \secref{ssec:reachable set}. The elements of extreme action space can be used to form the boundary of the mixed strategies. The number of elements in extreme action space $\hat{\mathcal{T}}_x$ and $\hat{\mathcal{T}}_y$ are denoted as $t_x$ and $t_y$, respectively. Then the best response strategy in the DOA (\algref{alg:doa}, \linref{alg1.2}) for Player 1 can be calculated by:
\begin{align}
\label{eq: maximal_ev}
    \max_{\mathbf{S}_x, \lambda}\sum_{k=1}^K P_{y,k} \sum_{i=1}^M \sum_{j=1}^N  u(\mathbf{S}_{x,ij},\mathbf{S}_{y,ij}^k).
\end{align}
\begin{align*}
    \text{s.t.}\quad
    &\sum_{j=1}^N\mathbf{S}_{x,ij}               =1, i\in\{1,2,3\};\\
    &\sum_{k=1}^{t_x}\lambda_{k}^{i}\mathbf{T}_x^k =\mathbf{S}_{x,i},\enspace i\in\{1,2,3\},\enspace \mathbf{T}_x^k\in \hat{\mathcal{T}}_x;\\
    &\sum_{k=1}^{t_x}\lambda_k^i          =1, i\in\{1,2,3\}.
\end{align*}
We set the outcome threshold $C$ in \eqref{eq:heter-utility} to be $0.25$ for the game with both homogeneous and CDH robots. The intrinsic transformation ratios are $\mathbf{I}_{12}=\mathbf{I}_{23}=\mathbf{I}_{31}=2$. We record 30 trials of the outcome for the following six baselines. 
\begin{itemize}
    \item \emph{Baseline 1:} Change $20\%$ of one player strategy from the mixed strategy $\mathbf{\Delta}_X$ calculated by DOA to the random strategies within constraints. The opponent uses the DOA strategy.
    \item \emph{Baseline 2:} Change $40\%$ of one player strategy from the mixed strategy $\mathbf{\Delta}_X$ calculated by DOA to random strategies within constraints. The opponent uses the DOA strategy.
    \item \emph{Baseline 3:} Change $80\%$ of one player strategy from the mixed strategy $\mathbf{\Delta}_X$ calculated by DOA to random strategies within constraints. The opponent uses the DOA strategy.
    \item \emph{Baseline 4:} Change $100\%$ of one player strategy from the mixed strategy $\mathbf{\Delta}_X$ calculated by DOA to random strategies within constraints. The opponent uses the DOA strategy.
    \item \emph{Baseline 5:} One player takes a pure strategy that arbitrarily selects a vertex (strategy) of the reachable set with $100\%$ probability. The opponent uses the DOA strategy.
    \item \emph{Baseline 6:} One player adopts a mixed strategy that selects the vertices (strategies) of the reachable set with uniform probability. The opponent uses the DOA strategy.
\end{itemize}
Suppose Player 1's strategy after change is $\mathbf{\Delta}_X'$, we denote the best response strategy of Player 2 for $\mathbf{\Delta}_X'$ as $\mathbf{S}_y'\in \delta_y(\mathbf{\Delta}_X')$. According to Lemma~\ref{lemma_2}~\cite{adam2021double}, any change from the equilibrium strategy of one player will benefit her opponent. This means if we change Player 1's strategy, then the expected utility $U(\mathbf{\Delta}_X',\mathbf{S}_y')$ tends to decrease from equilibrium value $U(\mathbf{\Delta}_X^*, \mathbf{\Delta}_Y^*)$, namely $U(\mathbf{\Delta}_X',\mathbf{S}_y')<U(\mathbf{\Delta}_X^*, \mathbf{\Delta}_Y^*)$. Particularly, we test two strategy variation schemes. In the first scheme, we adjust Player 2's strategy according to the baselines and calculate Player 1's best response strategy. The outcome of this scheme is represented by box plots with a darker shade in Figs.~\ref{fig: evaluation_homo}, \ref{fig:HETER_result}, and \ref{fig:result}. Conversely, in the second scheme, we modify Player 1's strategy according to the baselines and calculate Player 2's best response strategy. Its outcome is represented by box plots with a lighter shade in Figs.~\ref{fig: evaluation_homo}, \ref{fig:HETER_result}, and \ref{fig:result}. 
\subsection{Homogeneous or Linear Heterogeneous Robots}
\label{ssec:homo_result}
With homogeneous robots, the utility function $u(\cdot)$ in \eqref{eq: maximal_ev} is the classic function \eqref{eq:utility} introduced in \secref{ssec: equilibrium}. In \cite{adam2021double}, the allocation game with homogeneous robots has been solved without considering graph constraints. For linear heterogeneous robot allocation, the approach is the same except that we first transform different robot types into one type. Therefore, we study these two cases together. 

\figref{fig:homo_convergence} shows the convergence of DOA, where the blue curve represents the upper utility of the game $U_u$ and the red curve represents the lower utility of the game $U_l$ (\algref{alg:doa}). In iteration 20, their values converge. According to \lemmaref{lemma_2}, this demonstrates that the mixed strategies computed by DOA achieve the equilibrium. The run time is 4.243s.

We compare our approach (i.e., DOA) with six baselines across ten trials where we randomly generate the initial allocations for the two players on $G_2$. Within each trial, we run baselines with randomness involved 1000 times and illustrate them via the box plot. The results are shown \figref{fig: evaluation_homo}.  
Recall that \lemmaref{lemma_2} states that, in the context of an equilibrium mixed strategy set $(\mathbf{\Delta}_X^*, \mathbf{\Delta}_Y^*)$, any variation in one party's strategy invariably shifts the outcome towards a direction more advantageous to the opposing party. 
Here, the utility function is $u(\mathbf{S}_x-\mathbf{S}_y)$ (\secref{ssec: equilibrium}). Thus, a larger $u$ benefits Player 1 while a small $u$ benefits Player 2. \figref{fig: evaluation_homo} shows the expected values via the first scheme (i.e., darker shades) are always greater than the value obtained by DOA. That is because, with the first scheme, Player 2's strategy is randomly changed (with different levels for different baselines). This change benefits Player 1, as reflected by the increase in the game's expected value. Analogously, the expected values by the second scheme two (i.e., lighter shades) are always less than the value obtained by DOA. Because, with the second scheme, Player 1's strategy is randomly changed, which benefits Player 2, as reflected by the decrease in the game's expected value. In other words, any variation from DOA leads to a worse utility of a corresponding player. Therefore, this further demonstrates that the mixed strategies calculated by DOA achieve the equilibrium of the game with homogeneous or linear heterogeneous robots.  
\subsection{CDH Robots}
\label{ssec:heter_result}
For CDH robot allocation on three-node graphs $G_1$ and $G_2$, we set the initial robot distribution for Player 1 $\mathbf{d}_x$ and for Player 2 $\mathbf{d}_y$ as: 
\begin{equation*}
    \mathbf{d}_x = \begin{bmatrix}
        0.7 & 0.1 & 0.2\\
        0.4 & 0.4 & 0.2\\
        0.3 & 0.1 & 0.6
    \end{bmatrix},\quad 
    \mathbf{d}_y = \begin{bmatrix}
        0.2 & 0.2 & 0.6\\
        0.35 & 0.15 & 0.5\\
        0.4 & 0.2 & 0.4
    \end{bmatrix}. 
\end{equation*}
For the five-node graph $G_3$, we set the initial robot distributions as:
\begin{align*}
    \mathbf{d}_x &= \begin{bmatrix}
        0.2& 0.3& 0.1& 0.1& 0.3\\
        0.3& 0.1& 0.4& 0.1& 0.1\\
        0.2& 0.1& 0.1& 0.1& 0.5
    \end{bmatrix},\\
    \mathbf{d}_y &= \begin{bmatrix}
        0.1& 0.2& 0.3& 0.1,& 0.3\\
        0.35& 0.15& 0.1& 0.1& 0.3\\
        0.15& 0.2& 0.35& 0.1& 0.2
    \end{bmatrix}. 
\end{align*}
The column of $\mathbf{d}_x$ refers to different nodes and the row of $\mathbf{d}_x$ refers to different robot types. 
The utility function is $u_{\texttt{CDH}}(\cdot)$ in \eqref{eq:heter-utility}. 

\figref{fig:illustration} gives an example of the mixed strategies obtained by DOA. It is observed that for both players the mixed strategies calculated by DOA lie in the reachable set, which illustrates the graph constraints. In \figref{fig:cdh_convergence} the upper utility of the game $U_u$ (blue curve) and the lower utility of the game $U_l$ (red curve) converge to the expected utility of the game after after 40 iterations. According to \lemmaref{lemma_2}, this demonstrates that the mixed strategies computed by DOA are the equilibrium (or optimal) strategies. The DOA run time for $G_2$ is 3251s, and the DOA run time for $G_3$ is 11401s.

In \figref{fig: converge_comparison}, we calculate the equilibrium on $G_1$, a fully connected 3-node graph to demonstrate the effect of $C$. Here, the equilibrium value of the game should be zero, as it is a symmetric game where neither player should have an advantage over the other, as shown in \cite{roberson2006colonel}. We demonstrated the effect of $C$ by its four values, $C \in \{3/4, 1/2, 1/4, 1/16\}$. When $C=3/4$, the equilibrium value deviates from zero, suggesting it might be a local optimum. When $C$ is reduced to $1/4$, the equilibrium value converges to zero but takes a significantly longer time. This indicates that a larger $C$ may affect convergence accuracy to equilibrium and a smaller $C$ prolongs the convergence time. In other words, $C$ plays a trade-off between efficiency and accuracy. Moreover, the critical $C$ value at which DOA converges to zero in this case is approximately $1/2$. When $C$ exceeds $1/2$, the DOA may not converge to zero. Additionally, when $C$ is very small (e.g., $C=1/16$), the convergence takes a very long time and consumes a large amount of memory.
In \figref{fig:HETER_result}, we compare the expected utilities by DOA and other baselines (formed via the two schemes) for CDH robot allocation on different graphs $G_1$, $G_2$ and $G_3$. Similar to the results of homogeneous robot allocation, the expected utility $U(\mathbf{\Delta}_X^*, \mathbf{\Delta}_Y^*)$ is nearly zero and any change from the DOA strategies of one player always benefits her opponent, which is in line with the properties of equilibrium (\lemmaref{lemma_2}). 
In \figref{fig:result}, we set the intrinsic transformation ratio as a large number, i.e., $\mathbf{I}_{12}=\mathbf{I}_{23}=\mathbf{I}_{31}=500$ to model the case of the \textit{absolute dominance} heterogeneous robot allocation. The result also aligns with \lemmaref{lemma_2}. Therefore, \figref{fig:HETER_result} and \figref{fig:result} further verify that the mixed strategies calculated by DOA are the equilibrium strategies of the game with the CDH robots.  

In \figref{fig: evaluation_homo}, \figref{fig:HETER_result}-(b), (c), and \figref{fig:result}, we observe that the value of expected utility calculated by DOA is not zero. Notably, the conventional equi-resource Colonel Blotto game is a zero-sum game \cite{behnezhad2017faster}, \cite{roberson2006colonel} and the utility at the equilibrium is zero. This can be demonstrated by \figref{fig:HETER_result}-(a) where the two players have the same number of robots and the robots can move flexibly with $G_1$ a complete graph. However, due to the graph constraints in \figref{fig: evaluation_homo}, \figref{fig:HETER_result}-(b), (c), and \figref{fig:result}, the game between the two players is not symmetric with different initial robot distributions. This non-symmetry leads to a non-zero utility value at the equilibrium.

\section{Conclusions and Future Work}
\label{sec: conclusion}
In this paper, we formulated a two-player robot allocation game on graphs. Then we leveraged the DOA to calculate the equilibrium of the games with homogeneous, linear heterogeneous, and CDH robots. Particularly, for CDH robot allocation, we designed a new transformation approach that offers reasonable comparisons between different robot types. Based on that, we designed a novel utility function to quantify the outcome of the game. Finally, we conducted extensive simulations to demonstrate the effectiveness of DOA in finding the Nash equilibrium.    

Our first future work is to relax the constraints of \textit{Assumption 3} and discuss other fixed battle orders, for example $R_1$ first fights $R_2$, then $R_2$ fights $R_3$, and finally $R_3$ fights $R_1$. In this scenario, the structure of the outcome interface $\pi_{\texttt{oi}}$ would differ, necessitating the formulation of a corresponding utility function for optimization in DOA.
Second, we will include the spatial and temporal factors in the robot allocation problem. The current setting assumes instantaneous transitions of robots between nodes. 
However, in realistic scenarios, spatial and temporal factors such as the distances between nodes and varying transition speeds of the robots need to be considered. 
Moreover, we will incorporate the elements of deception into the game, which differs from the current setting that assumes both players to have real-time awareness of each other's strategy and allocation. This would encourage the players to strategically mislead opponents by sacrificing some immediate gains to achieve larger benefits in the longer term. 

\bibliographystyle{IEEEtran}
\bibliography{ref}
\appendix
We use an example to explain the hardness
caused by increasing the number of robot types to four, i.e., $M=4$. Suppose there are four types of robots $R_1, R_2, R_3, R_4$, and the intrinsic matrix is 
$$
\mathbf{I}=\begin{bmatrix}
    1&2&-&1/2\\
    1/2&1&2&-\\
    -&1/2&1&2\\
    2&-&1/2&1.
\end{bmatrix}
$$
Note that the entries $\mathbf{I}_{13},\mathbf{I}_{24},\mathbf{I}_{31},\mathbf{I}_{42}$ (denoted as $-$ in the intrinsic matrix) are not defined, as they are not involved in the calculation. For instance, on a node, Player 1's allocation is $(u_1,v_1,w_1,z_1)=(1,0,1,0)$ while Player 2's allocation is $(u_2,v_2,w_2,z_2)=(0,1,0,1)$. Therefore the remaining robot distribution after subtraction is $(\delta u,\delta v,\delta w,\delta z)=(1,-1,1,-1)$. This scenario presents a challenging outcome to determine a clear winner. Consider the intrinsic relationship between $R_1$ and $R_2$ and that between $R_3$ and $R_4$. Given that $R_1$ equals $2R_2$, the elimination of $1R_1$ from $1R_2$ results in a remaining balance of $0.5R_1$. Similarly, the elimination of $1R_3$ from $1R_4$ leads to a balance of $0.5R_3$. From the perspective of the remaining robots, represented as $(0.5,0,0.5,0)$, it shows that Player 1 wins. However, if we instead consider the countering relationship between $R_2$ and $R_3$ and that between $R_4$ and $R_1$. Following the same logic, the outcome becomes $(0,-0.5,0,-0.5)$, showing Player 2 is the winner. It is important to notice that both approaches adhere to the \emph{elimination transformation} rule in \secref{ssec:NLH-heter}. This dichotomy shows the complexity and ambiguity in adjudicating a definitive winner in this context. This implies that in the case of $M>3$, it is necessary to introduce new mechanisms for determining win or loss on each node.

Notably, the intrinsic matrix in the above example is not deliberately constructed. In fact, any intrinsic matrix that reflects the CDH relationships between different robot types will inevitably lead to such issues. In \secref{sssec:u_for_heter}, we prove that for $M=3$, outcome interface $\pi_{oi}(\mathbf{x})=0$ is the demarcation surface of win and loss. However, the winning condition described in \secref{ssec:NLH-heter} does not hold for $M>3$, and thus the demarcation surface does not hold either. We conjecture that the demarcation for $M>3$ is a polytope, which we leave for future study. 
\end{document}